\titleformat{\section}[hang]
{\large\bfseries}
{\thesection.}{0.5em}{}
\titleformat{\subsection}[hang]
{\bfseries}
{\thesubsection.}{0.5em}{}
\newtheorem{theorem}{Theorem}[section]
\newtheorem{lemma}[theorem]{Lemma}
\newtheorem{definition}{Definition}[section]
\newcommand{\Ind}{\mathbb{I}}
\newcommand{\ytilde}{\tilde{y}}
\newcommand{\yhat}{\hat{y}}
\newcommand{\E}{\mathbb{E}}
\newcommand{\Prob}{\mathbb{P}}
\newcommand{\chat}{\hat{c}}
\newcommand{\fhat}{\hat{f}}
\newcommand\reals{\mathbb{R}}
\newcommand\onev{\textbf{1}}
\newcommand\zerov{\textbf{0}}
\newcommand\zeroOne{l^{0-1}}
\newcommand\zeroOneHat{\hat{l}^{0-1}}
\newcommand\stdv{e}
\newcommand\costsOriginal{\mathcal{C}^{eor}_1}
\newcommand\weakspace{\mathcal{H}}
\newcommand\sumtt{\sum_{t=1}^T}
\newcommand\logLoss{l^{\log}}
\DeclareMathOperator{\argmax}{argmax}
\DeclareMathOperator{\argmin}{argmin}
\title{Online Multiclass Boosting with Bandit Feedback}
\author{
  Daniel T. Zhang\\
  Department of Computer Science\\
  University of Michigan\\
  \texttt{dtzhang@umich.edu} \\
  \and
  Young Hun Jung\\
  Department of Statistics\\
  University of Michigan\\
  \texttt{yhjung@umich.edu} \\
  \and
  Ambuj Tewari\\
  Department of Statistics\\
  University of Michigan\\
  \texttt{tewaria@umich.edu} \\
}
\begin{document}
\twocolumn[
\maketitle]
\begin{abstract}
We present online boosting algorithms for multiclass classification with bandit feedback, where the learner only receives feedback about the correctness of its prediction. We propose an unbiased estimate of the loss using a randomized prediction, allowing the model to update its weak learners with limited information. Using the unbiased estimate, we extend two full information boosting algorithms \citep{jmulti} to the bandit setting. We prove that the asymptotic error bounds of the bandit algorithms exactly match their full information counterparts. The cost of restricted feedback is reflected in the larger sample complexity. Experimental results also support our theoretical findings, and performance of the proposed models is comparable to that of an existing bandit boosting algorithm, which is limited to use binary weak learners
\end{abstract}

\section{INTRODUCTION}
We study the online multiclass classification problem with bandit feedback. In this setting, the data instances arrive sequentially, and the learner has to predict the label among a finite, but perhaps large, set of candidates. In certain practical settings, such as when the labels are ads or product recommendations on the web, the learner does not receive the correct label as feedback. Instead, it only receives feedback about whether its predicted label was correct (e.g., the user clicked on the ad or recommendation) or not (e.g., user did not click). However, training machine learning models under such partial feedback is challenging. A common approach is to convert a full information algorithm into a bandit version without incurring too much performance loss (see, for example, \cite{banditron} and \cite{beygelzimer2017efficient} for work using the perceptron algorithm). 

In this paper, we design online algorithms for multiclass classification under bandit feedback by building on recent online boosting work in the full-information setting. Online boosting algorithms combine the predictions of multiple online weak learners to improve prediction performance. Classical boosting algorithms were designed for the batch setting. \cite{chen2012online} and \cite{beygelzimer2015optimal} first developed a theory of online boosting for binary classification. Then \cite{jmulti} and \cite{jrank} extended the theory to the multiclass classification and the multilabel ranking problems. These works prove that the boosting algorithm's asymptotic error converges to zero if the number of weak learners, whose predictions are slightly better than random guessing, gets larger. 

Designing a boosting algorithm with bandit feedback is particularly difficult as it is not clear how to update the weak learners. For example, suppose that a weak learner $WL^1$ predicts the label $1$, another learner $WL^2$ predicts the label $2$, and the boosting algorithm predicts the label $1$, which turns out to be incorrect. We cannot even tell $WL^2$ whether its prediction is correct. To the best of our knowledge, \cite{binmulti} are the only ones who have proposed a boosting algorithm in the multiclass bandit setting. However, their algorithm is restricted to use binary weak learners and only updates a subset of them, viz. ones which can get full feedback. In contrast, our algorithms use multiclass weak learners and update every learner at each round.

To derive our algorithms and guarantees, we extend the work of \cite{jmulti} to the bandit setting. Instead of making a deterministic prediction, our algorithms randomize them. This allows them to estimate the loss using the distribution over labels, and this estimate is used to update the weak learners. Similar to the full information work, we propose a computationally expensive algorithm, BanditBBM, with an optimal error bound and a more practical algorithm, AdaBandit, with a suboptimal bound. AdaBandit is the first adaptive boosting algorithm in the bandit setting that does not assume weak learners' edge over random is known beforehand. Interestingly, our algorithms' asymptotic error bounds match the full information counterparts with increased sample complexity, which can be interpreted as the cost of bandit feedback.

\section{PRELIMINARIES}

We denote the indicator function by $\Ind(\cdot)$, the $i^{th}$ standard basis vector by $\stdv_i$, the vector of ones by $\onev$, and the vector of zeros by $\zerov$. We will use $[n]$ for the set $\{1, \cdots, n\}$, $\Delta_n$ for probability distributions over $[n]$. 

\vspace{-1mm}
\subsection{Problem Setting}
We first describe the online multiclass classification problem with bandit feedback. It is a sequential game between two players: learner and adversary. The set $[k]=\{1, \cdots, k\}$ of $k$ possible labels is known to both players. At each round $t=1, \cdots, T$, the adversary selects a labeled example $(x_t, y_t)\in\mathcal{X}\times [k]$ (where $\mathcal{X}$ is some domain) and sends only $x_t$ to the learner. The learner then tries to guess its label and sends its prediction $\hat{y}_t$ back to the adversary. As we are in the bandit setting, the adversary only reveals whether the prediction is correct by sending $\Ind(\hat{y}_t \neq y_t)$ to the learner. The learner's goal is to minimize the number of incorrect predictions. In other words, the learner's performance is evaluated by the zero-one loss (see (\ref{eq:zeroOne}) for definition). 

To tackle this problem, we use the online multiclass boosting setup of \cite{jmulti}. In this setting, the learner further splits into $N$ online weak learners, $WL^1, \cdots, WL^N$, as well as a booster that handles the weak learners. When the booster receives an unlabeled instance $x_t$, it shares this information with the weak learners and then aggregates their predictions to produce the final prediction. Here we assume that each weak learner $WL^{i}$ predicts a label $h^i_t$ in $[k]$. Once the booster gets the feedback from the adversary, it computes a cost vector $c^i_t \in \reals^k$ for $WL^i$ to incur the loss $c^i_{t, h^i_t}$ and update its prediction rule. It should be noted that even though our boosting algorithms are designed for the bandit setting, the weak learners observe full cost vectors $c^i_t \in \reals^k$, which are constructed from bandit feedback by the booster. 

\subsection{Unbiased Estimate of the Zero-One Loss}
Even though we have not specified how to compute cost vectors $c^i_t$, it is naturally expected that they should depend on the final zero-one loss vector:
\begin{equation}
\label{eq:zeroOne}
    \zeroOne_t = \onev - \stdv_{y_t} \in \reals^k.
\end{equation}
As we are in the bandit setting, the booster only has limited information about this vector. In particular, unless its final prediction is correct, only a single entry of $\zeroOne_t$ is available. 

A popular approach for algorithm design in the partial information setting is to obtain an unbiased estimate of the loss. To do so, many bandit algorithms randomize their prediction. In our setting, instead of making a deterministic prediction $\yhat$, the algorithm designs a sampling distribution $p_t \in \Delta_k$ as follows:
\begin{gather}
\label{pt_definition}
p_{t, i} =
\begin{cases}
1-\rho &\text{ if } i=\hat{y_t}\\
\frac{\rho}{k-1} &\text{ if } i \neq \hat{y_t}
\end{cases},
\end{gather}
where $\rho$ is a parameter that controls the exploration rate. This distribution puts a large weight on the label $\yhat_t$ and evenly distributes the remaining weight over the rest. The algorithm draws a final prediction $\ytilde_t$ based on $p_t$. In this way, the algorithm can build an estimator using the known sampling distribution. A simplest unbiased estimate of the zero-one loss is 
\begin{equation}
\label{eq:trivalEst}
    \zeroOneHat_t = \frac{\Ind(\ytilde_t = y_t)}{p_{t, \ytilde_t}}(\onev - \stdv_{\ytilde_t}) \in \reals^k.
\end{equation}
It is easy to check that this is indeed unbiased. However, it is not necessarily the best because it becomes a zero vector when the booster makes a mistake. As the zero loss vector does not provide any useful information, the weak learners cannot update at this round. Therefore, it would be hard for the booster to escape the early training stage using the simple estimate. 

As an alternative, we propose a new estimator
\begin{align}
\begin{split}
\label{lhat}
\zeroOneHat_{t, i}=&\frac{\Ind(\ytilde_t=y_t)}{p_{t, \ytilde_t}}\Ind(y_t\neq i)\Ind(\yhat_t\neq i)\\
&~~~~+ \frac{\Ind(\ytilde_t=\yhat_t)}{p_{t,\ytilde_t}}\Ind(\yhat_t\neq y_t)\Ind(\yhat_t=i).
\end{split}
\end{align}
We first emphasize that this quantity can be computed only using the bandit feedback. The proof that it is actually unbiased appears in Appendix \ref{section:unbiased}.

This estimator resolves the main issue with the estimator in~\eqref{eq:trivalEst}, viz. that the learner cannot update during a mistake round. In fact, it allows the weak learner to update on each instance with probability at least $1-\rho$. Furthermore, the algorithms using this estimator empirically performed much better than ones using the estimator in \eqref{eq:trivalEst}. For these reasons, we will stick to the estimate in \eqref{lhat} from now on. 

To apply concentration inequalities, we need to control the variance of estimators. We say a random vector $Y$ is \textit{$b$-bounded} if 
$||Y - \E Y||_\infty \leq b$ almost surely.
Note that this definition also applies to random variables (i.e., scalars), in which case the norm above simply becomes the absolute value.
It is easy to check our estimator $\zeroOneHat_t$ is $\frac{k}{\rho}$-bounded. 

Now suppose that a cost vector $c^i_t \in \reals^k$ (to be fed into weak learner $i$ at time $t$) requires the knowledge of the true label $y_t$. Since the label is usually unavailable, we also need to estimate the cost vector. We first compute a matrix $C^i_t \in \reals^{k\times k}$, whose $j^{th}$ column is the cost vector $c^i_t$ assuming $j$ is the correct label. Then we will use the following random cost vector:
\begin{equation}
\label{eq:costEst}
    \chat^i_t = C^i_t \cdot (\onev - \zeroOneHat_t ).
\end{equation}
Since $C^i_t$ is a deterministic matrix, we can compute
\begin{equation*}
    \E_{\ytilde_t} \chat^i_t 
    = C^i_t \cdot (\onev - \zeroOne_t) 
    = C^i_t \cdot \stdv_{y_t},
\end{equation*}
which is the $y_t^{th}$ column of $C^i_t$. This shows that $\chat^i_t$ is an unbiased estimate of $c^i_t$. 

\section{ALGORITHMS}
We introduce two different online boosting algorithms and provide their theoretical error bounds. As the booster's performance obviously depends on the weak learner's predictive power, we need a way to quantify the latter. Firstly, we define the edge of a weak learner over random guessing and assume every weak learner has a positive edge $\gamma$. This edge is closely related to the one defined in the full information setting, and hence we can easily compare the error bounds between the two settings. This idea leads to the algorithm BanditBBM, which has a very strong error bound. Secondly, instead of having an additional assumption on the weak learners, we measure empirical edges of the learners and use these quantities to bound the number of mistakes. We call this algorithm AdaBandit, and since it enjoys theoretical guarantees under fewer assumptions, it is more practical. 

\subsection{Algorithm Template}
\label{section:template}
Our boosting algorithms share a template, which we discuss here. As it adopts the cost vector framework from \cite{jmulti} and \cite{jrank}, the template is very similar except for the additional step of estimating the loss.

\begin{algorithm}[t]
	\begin{algorithmic}[1]
	    \STATE \textbf{Input:} Exploration rate $\rho$
		\STATE \textbf{Initialize:} Weak learner weights $\alpha^{i}_{1}$ for $i \in [N]$
		\FOR {$t = 1, \cdots, T$}
		\STATE Receive example $x_{t}$
		\STATE Get predictions $h^{i}_{t} \in [k]$ from $WL^i$ for $i \in [N]$
		\STATE Compute expert predictions for $j \in [N]$ \\
		$~~s^{j}_{t} = \sum_{i=1}^{j}\alpha^{i}_{t}\stdv_{h^{i}_{t}} \in \reals^k$ and 
		$\yhat^j_t = \argmax_l s^j_{t, l}$
		\STATE Choose an expert index $i_{t} \in [N]$ 
		\STATE Get an intermediate prediction $\yhat_t = \yhat^{i_t}_t$
		\STATE Compute $p_t$ in (\ref{pt_definition})
		\STATE Draw $\ytilde_t$ using $p_t$ and send to the adversary
		\STATE Receive feedback $\Ind(\ytilde_t \neq y_t)$
		\STATE Estimate the loss by $\zeroOneHat_t$ in (\ref{lhat})
		\STATE Update weights $\alpha^{i}_{t+1}$ for $i \in [N]$
		\STATE Compute cost vectors $\chat^{i}_{t}$ for $i \in [N]$
		\STATE Weak learners suffer the loss $\chat^{i}_{t, h^i_t}$
		\STATE Weak learners update the internal parameters
		\STATE Update the booster's parameters, if any
		\ENDFOR
	\end{algorithmic}
	\caption{Online Bandit Boosting Template}
	\label{alg:template}
\end{algorithm}

To keep the template in Algorithm \ref{alg:template} general, we do not specify certain steps, which will be finalized later for each algorithm. Also, we do not restrict weak learners in any way except requiring that each $WL^i$ predicts a label $h^i_t \in [k]$, receives a full loss vector $\chat^i_t \in \reals^k$, and suffers the loss $\chat^i_{t, h_t}$ according to their prediction.

The booster keeps updating the learner weights $\alpha^i_t$ and constructs experts. There are $N$ experts where the expert $j$ tracks the weighted cumulative votes among the first $j$ weak learners: $s^{j}_{t} = \sum_{i=1}^{j}\alpha^{i}_{t}\stdv_{h^{i}_{t}} \in \reals^k$. We also track $\yhat^j_t = \argmax_l s^j_{t, l}$, where the tie breaks arbitrarily. Then the booster chooses an expert index $i_t \in [N]$ at each round $t$ and decides the intermediate prediction $\yhat_t$ as $\yhat^{i_t}_t$. In other words, it takes a weighted majority vote among the first $i_t$ learners. BanditBBM fixes $i_t$ to be $N$, while AdaBandit draws it randomly using a calibrated distribution. Using $\yhat_t$ and the exploration rate $\rho$, the booster computes the sampling distribution $p_t \in \Delta_k$ as in (\ref{pt_definition}). A random label $\ytilde_t$ drawn from $p_t$ is the final prediction, and the booster gets the feedback $\Ind(\ytilde_t \neq y_t)$. Then it constructs the unbiased estimate of the zero-one loss in (\ref{lhat}) and updates the learner weights $\alpha^i_t$. Finally, it computes cost vectors $\chat^i_t\in\reals^k$ for $WL^i$ and lets them update their parameters.

\subsection{An Optimal Algorithm}
The first algorithm, BanditBBM (Bandit Boost-by-Majority), assumes the bandit weak learning condition, which states that weak learners are better than random guessing. The algorithm is optimal: it requires the minimal number of weak learners up to a constant factor to attain a certain accuracy. 

\subsubsection{Bandit Weak Learning Condition}
\label{bandit_WLC}
This section proposes a bandit weak learning condition which requires weak learners to do better than random guessing, having only observed unbiased estimates of cost vectors. At a high level, what the weak learning condition says is that, as far as the random cost vectors provided by the booster satisfy certain conditions, the weak learner can perform better than random guessing when graded against the expected cost vectors. As a baseline, we define $u^l_\gamma\in\Delta_k$ to be almost a uniform distribution that puts $\gamma$ more weight on the label $l$. As an example, $u^k_\gamma=(\frac{1-\gamma}{k}, \cdots, \frac{1-\gamma}{k}, \frac{1-\gamma}{k}+\gamma)$. The intuition is that if a learner predicts a label based on $u^{y_t}_\gamma$ at each round, then its accuracy would be better than random guessing by the edge $\gamma$. 

The booster's goal is to minimize the number of incorrect predictions and hence it wants to put the minimal cost on the correct label. In this regard, \cite{jmulti} constrain the choice of cost vectors\footnote{In fact, the authors constrain the choice of cost matrices, but it suffices to choose a specific row to get a cost vector.} to 
\begin{equation}
\label{eq:costOriginal}
    \costsOriginal = \{c \in \reals^k_+ ~|~ c_y = 0 \text{ and } ||c||_1 = 1\},
\end{equation}
where $y$ is the correct label. We allow a sample weight that can be multiplied by a cost vector to include scaled cost vectors. One remark is that we can always subtract a common number from every entry of the cost vector as we are interested in the relative loss. This means that as long as the booster puts the minimal cost on the correct label, one can transform it to represent as $wc$ for some weight $w$ and $c \in \costsOriginal$. Meanwhile, we are in the bandit setting, where the true label is often unavailable to the booster. Therefore, we allow our booster to compute a random vector $\chat^i_t$, whose expectation lies in $\costsOriginal$. Once the exploration rate $\rho$ is specified, we can additionally ensure that the random cost vectors are $\frac{k}{\rho}$-bounded.

It would be theoretically most sound if the bandit weak learning condition can be closely related to its full information counterpart. To do so, we present two weak learning conditions together. The settings are almost identical except the full information version observes a deterministic cost vector $c_t \in \costsOriginal$ while the bandit version only observes a randomized vector $\chat_t$ such that $\E_{\ytilde_t} \chat_t \in \costsOriginal$. Recall that the entire cost vector is shown to the learner even in the bandit setting. For both conditions, the time horizon is $T$, labeled data are chosen adaptively, and the parameters $\gamma, \delta$, and the sample weights $w_t$ lie in $[0, 1]$. 

\begin{definition}[OnlineWLC from \cite{jmulti}]
\label{def:fullWLC}
A pair of a learner and an adversary satisfies OnlineWLC($\gamma, \delta, S$) if the learner can generate predictions $\yhat_t$ such that we have with probability $1-\delta$, 
\begin{equation*}
    \sum_{t=1}^T w_t c_{t, \yhat_t} \leq \sum_{t=1}^T w_t c_t \cdot u^{y_t}_\gamma + S.
\end{equation*}
\end{definition}

\begin{definition}[BanditWLC]
\label{def:banditWLC}
Suppose the random cost vectors $\chat_t$ are $b$-bounded for some $b$. A pair of learner and adversary satisfies BanditWLC($\gamma, \delta, S$) if the learner can generate predictions $\yhat_t$, observing the random cost vectors $w_t \chat_t$, such that we have with probability $1-\delta$, 
\begin{equation*}
    \sum_{t=1}^T w_t c_{t, \yhat_t} \leq \sum_{t=1}^T w_t c_t \cdot u^{y_t}_\gamma + S,
\end{equation*}
where $c_t = \E \chat_t$ for all $t$. 
\end{definition}
Here $S$ is called excess loss. OnlineWLC is a special case of BanditWLC where the bound $b$ is $0$. In fact, we can show more intrinsic relations between the two. 

Suppose there is a fixed hypothesis class $\weakspace$ and an online learner makes a prediction $h_t(x_t)$ at time $t$ by choosing a hypothesis $h_t \in \weakspace$. Obviously, this setting does not cover all online learners, but the most widely used learners can be interpreted in this manner. \cite{jmulti} showed that the OnlineWLC can be derived from the following two assumptions:

\begin{itemize}
	\item (Online Richness Condition) For any sequence of cost vectors $(w_t, c_t) \in [0, 1] \times \costsOriginal$, there is a hypothesis $h\in \weakspace$ such that
	\begin{equation*}
	    \sum_{t=1}^T w_t c_{t, h(x_t)} \leq \sum_{t=1}^T w_t c_t \cdot u^{y_t}_\gamma.
	\end{equation*}
	
	\item (Online Agnostic Learnability Condition) For any sequence of (bounded) loss vectors $l_t \in \reals^k$, there is an online algorithm which can generate predictions $\yhat_t$ such that with probability $1-\delta$,
		\begin{gather*}
			\label{agnostic}
			\sum_{t=1}^T l_{t,\yhat_t} \leq \inf_{h \in \mathcal{H}} 
				\sum_{t=1}^T l_{t,h(x_t)} + R_\delta (T),
		\end{gather*}
		where $R_\delta(\cdot)$ is a sublinear regret.
\end{itemize}
Note that the online learnability condition only assumes a bounded loss instead of $\costsOriginal$. This condition holds, for example, if the space $\weakspace$ has a finite Littlestone dimension. Interested readers can refer the paper by \cite{daniely2015multiclass}. We show that these two conditions also imply the BanditWLC. The proof appears in Appendix \ref{boundingLemma}.

\begin{theorem}
\label{bridge}
Suppose a pair of weak learning space $\weakspace$ and adversary satisfies the richness condition with edge $2\gamma$ and the agnostic learnability condition with regret $R_\delta$(T). Addionally, we assume that $w_t \geq m$ for all $t$. Then the online learner based on $\weakspace$ satisfies BanditWLC($\gamma, 2\delta, S$) with
\begin{equation*}
    S=\sup_T -\frac{\gamma}{k}mT + b\sqrt{2T\log\frac{1}{\delta}} + R_\delta(T),
\end{equation*}
where $b$ is the bound of the random cost vectors. 
\end{theorem}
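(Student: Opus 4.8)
The plan is to mirror the full-information derivation of OnlineWLC from richness plus agnostic learnability, inserting a single concentration step to pass between the observed random costs $\chat_t$ and their means $c_t=\E\chat_t$. First I would feed the observed loss vectors $l_t=w_t\chat_t$ into the agnostic online learner. Since these are bounded, the agnostic learnability condition guarantees that, with probability $1-\delta$, the learner's predictions $\yhat_t$ satisfy
\[\sum_t w_t\chat_{t,\yhat_t}\le\inf_{h\in\weakspace}\sum_t w_t\chat_{t,h(x_t)}+R_\delta(T)\le\sum_t w_t\chat_{t,h^*(x_t)}+R_\delta(T),\]
where $h^*\in\weakspace$ is the hypothesis supplied by the richness condition run at edge $2\gamma$.

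Second, I would invoke richness at edge $2\gamma$, giving $\sum_t w_t c_{t,h^*(x_t)}\le\sum_t w_t c_t\cdot u^{y_t}_{2\gamma}$. The purpose of doubling the edge is purely arithmetic: because $c_t\in\costsOriginal$ (so $c_{t,y_t}=0$ and $\norm{c_t}_1=1$), a direct computation gives $c_t\cdot u^{y_t}_{2\gamma}=\frac{1-2\gamma}{k}=c_t\cdot u^{y_t}_\gamma-\frac{\gamma}{k}$. Summing and using $w_t\ge m$ converts the $2\gamma$-comparator into the target $\gamma$-comparator at the cost of a negative linear surplus $-\frac{\gamma}{k}\sum_t w_t\le-\frac{\gamma}{k}mT$, which is exactly the first term of $S$.

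Third, and this is the crux, I would bridge $\chat$ and $c$ by martingale concentration. The intermediate prediction $\yhat_t$ is fixed before the fresh exploration randomness of round $t$, so $\E[\chat_t\mid\mathcal{F}_{t-1}]=c_t$ makes $w_t(c_{t,\yhat_t}-\chat_{t,\yhat_t})$ a bounded martingale difference; since $\chat_t$ is $b$-bounded and $w_t\le 1$, Azuma--Hoeffding yields, with probability $1-\delta$, a deviation of at most $b\sqrt{2T\log\frac1\delta}$, the second term of $S$. Combining the three displays and union-bounding the two failure events gives the claimed bound with confidence $1-2\delta$, and taking $\sup_T$ over the horizon-dependent right-hand side produces a horizon-free excess loss $S$.

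The main obstacle is the comparator side of the concentration: the richness hypothesis $h^*$ depends on the entire realized sequence of expected costs, so $h^*(x_t)$ is not $\mathcal{F}_{t-1}$-measurable and $w_t(\chat_{t,h^*(x_t)}-c_{t,h^*(x_t)})$ is not directly a martingale difference. I would handle this either by folding the learner-side and comparator-side deviations into a single martingale whose increments $w_t[(c_{t,\yhat_t}-c_{t,h^*(x_t)})-(\chat_{t,\yhat_t}-\chat_{t,h^*(x_t)})]$ have vanishing conditional mean once $h^*$ is treated as fixed given the realized mean-cost sequence, or by controlling the comparator deviation uniformly over $\weakspace$. In either case the $b$-boundedness of $\chat_t$ keeps the increments controlled, and the negative linear surplus from the doubled edge is what absorbs the resulting sublinear fluctuation as $T$ grows, so that the $\sup_T$ defining $S$ stays finite.
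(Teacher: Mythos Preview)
Your proposal follows essentially the same skeleton as the paper: richness at the doubled edge $2\gamma$, agnostic learnability on the observed random costs, an Azuma--Hoeffding step to pass between $\chat_t$ and $c_t$, and the arithmetic $c_t\cdot u^{y_t}_{2\gamma}=c_t\cdot u^{y_t}_{\gamma}-\frac{\gamma}{k}$ together with $w_t\ge m$ to produce the $-\frac{\gamma}{k}mT$ slack.

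The point you flag as the ``main obstacle'' is handled in the paper much more simply than your two proposed workarounds. The paper's proof opens with ``Fix the sequence of cost vectors $w_tc_t$,'' i.e., it conditions on the realized mean-cost sequence. Under that conditioning the richness witness $h^*$ is a deterministic function, so $p_t=\stdv_{h^*(x_t)}$ is admissible in Azuma--Hoeffding and a single comparator-side concentration step gives
\[
\inf_{h\in\weakspace}\sum_t w_t\chat_{t,h(x_t)}\le\sum_t w_t c_t\cdot u^{y_t}_{2\gamma}+b\sqrt{2T\log\tfrac{2}{\delta}}.
\]
Your combined-martingale or uniform-over-$\weakspace$ arguments are not needed. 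One minor difference in bookkeeping: the paper performs the concentration only on the comparator side and its final displayed inequality actually bounds $\sum_t w_t\chat_{t,\yhat_t}$ rather than $\sum_t w_t c_{t,\yhat_t}$; your explicit learner-side Azuma step (which is clean because $\yhat_t$ is $\mathcal{F}_{t-1}$-measurable) brings the conclusion into literal agreement with the BanditWLC definition, at the cost of a second $b\sqrt{2T\log\frac{1}{\delta}}$ term that the paper's stated $S$ does not include.
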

The extra condition $w_t \geq m$ is acceptable because if $w_t = 0$ for some $t$, then we can simply ignore this round because any prediction does not incur a loss. The excess loss $S$ is always finite due to the sublinear regret $R_\delta(T)$. Furthermore, a smaller $\delta$ would require a larger $S$. The excess loss in the BanditWLC is larger than the one in the OnlineWLC due to the term $b\sqrt{2T\log\frac{1}{\delta}}$. This is intuitive in that the learner needs more samples if only bandit feedback is available. Finally, the exploration rate $\rho$ also affects $S$ because $b$ is equal to $\frac{k}{\rho}$. This provides the following rough bound:
\begin{equation}
\label{eq:SBound}
    S = \tilde{O}(\frac{k}{\rho}),
\end{equation}
where $\tilde{O}$ suppresses dependence on $\log \frac{1}{\delta}$.

\subsubsection{BanditBBM Details}
\label{zero_one_direct}

Throughout the section, we assume the weak learners satisfy BanditWLC($\gamma, \delta, S$). BanditBBM is a modification of OnlineMBBM from \cite{jmulti} by incorporating the unbiased estimate of the loss in (\ref{lhat}). 

We use the potential function $\phi^y_i(s)$, discussed thoroughly in relation to boosting by \cite{offmult}, to design the cost vectors. The potential function $\phi^y_i$ takes the current cumulative votes $s \in \reals^k$ as an input and estimates the booster's loss when the true label is $y$ and there are $i$ weak learners left until the final prediction. In particular, it can be recursively defined as follows:
\begin{align*}
\phi^y_0(s)
&=\Ind(\argmax_i s_i \neq y)\\
\phi^y_{i+1}(s)
&=\E_{l\sim u^{y}_\gamma}\phi^y_i(s+\stdv_l).
\end{align*}
Unfortunately, this potential does not have a closed form. Since potential functions are the main ingredient to design cost vectors, their computation becomes a bottleneck when running the algorithm. This is a weakness of BanditBBM despite its strong mistake bound. However, one can use Monte Carlo simulations to approximate its value. 

Returning to our algorithm, we essentially want to set the cost vector to 
\begin{gather}
\label{eq:costBMBM}
    c^i_{t, l} = \phi^{y_t}_{N-i}(s^{i-1}_t + \stdv_l). 
\end{gather}
\cite{jmulti} prove that this cost vector puts the minimal cost on the correct label and thus it is a valid choice. The booster in our setting, however, cannot compute this vector as it requires the knowledge of the true label $y_t$. As an alternative, we create the following cost matrix
\begin{equation}
\label{eq:costMatBMBM}
    C^i_t[l, r] = \phi^{r}_{N-i}(s^{i-1}_t + \stdv_l)    
\end{equation}
and use (\ref{eq:costEst}) to compute a random cost vector $\chat^i_t$, which is an unbiased estimate of $c^i_t$ in (\ref{eq:costBMBM}).

\begin{algorithm}[t]
	\begin{algorithmic}[1]
        \setcounter{ALC@line}{1}	   
		\STATE \textbf{Initialize:} Set $\alpha^{i}_{1}=1$ for $i \in [N]$
        \setcounter{ALC@line}{6}	   
		\STATE Set $i_t = N$
        \setcounter{ALC@line}{12}
		\STATE Keep weights $\alpha^{i}_{t+1}=1$ for $i \in [N]$
		\STATE Compute cost vectors $\chat^{i}_{t}$ using \eqref{eq:costEst} and \eqref{eq:costMatBMBM}
        \setcounter{ALC@line}{16}
		\STATE There is no extra parameter
	\end{algorithmic}
	\caption{BanditBBM Details}
	\label{alg:BMBM}
\end{algorithm}

The rest of the algorithm is straightforward. We set all weights $\alpha^i_t$ to be one and always choose the last expert: $i_t = N$. This means that the intermediate prediction $\yhat_t$ is a simple majority vote among all the weak learners. The reasoning behind this is that the booster wants to include all learners as they are strictly better than random, and all weak learners are equivalent in that they share the same edge $\gamma$. Algorithm \ref{alg:BMBM} summarizes the specifications. 

\subsubsection{Mistake Bound of BanditBBM}
We still assume that our weak learners satisfy BanditWLC($\gamma, \delta, S$). From observation (\ref{eq:SBound}), it is reasonable to assume $S = \tilde{O}(\frac{k}{\rho})$. Upon these assumptions, we can bound the number of mistakes made by BanditBBM. The proof appears in Appendix \ref{banditboost_proof}

\begin{theorem}[Mistake Bound of BanditBBM]
\label{thm:BMBM}
For any $T$, $N$ satisfying $\delta \ll \frac{1}{N}$,
the number of mistakes made by BanditBBM satisfies the following inequality with probability at least $1-(N+1) \delta$:
\begin{equation*}
    \sumtt \Ind(\ytilde_t \neq y_t) 
    \leq
    (k-1) e^{-\frac{\gamma^2 N}{2}} T +2\rho T + \tilde{O}(\frac{k^{7/2}\sqrt{N}}{\rho}),
\end{equation*}
where $\tilde{O}$
suppresses dependence on $\log \frac{1}{\delta}$.
\end{theorem}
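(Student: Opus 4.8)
The plan is to adapt the potential-based boost-by-majority telescoping of \cite{jmulti} to the randomized bandit setting. Writing $s^i_t=\sum_{j=1}^i\stdv_{h^j_t}$ for the cumulative vote after the first $i$ learners (all weights are $1$), the recursive definition of the potential yields two identities: by \eqref{eq:costBMBM}, $\phi^{y_t}_{N-i}(s^i_t)=\phi^{y_t}_{N-i}(s^{i-1}_t+\stdv_{h^i_t})=c^i_{t,h^i_t}$, and $\phi^{y_t}_{N-i+1}(s^{i-1}_t)=\E_{l\sim u^{y_t}_\gamma}\phi^{y_t}_{N-i}(s^{i-1}_t+\stdv_l)=c^i_t\cdot u^{y_t}_\gamma$. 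Since $i_t=N$ and $\Ind(\yhat_t\neq y_t)=\phi^{y_t}_0(s^N_t)$, summing the telescoping identity over $i=1,\dots,N$ gives
\begin{equation*}
\sumtt\Ind(\yhat_t\neq y_t)=\sumtt\phi^{y_t}_N(\zerov)+\sum_{i=1}^N\sumtt\left(c^i_{t,h^i_t}-c^i_t\cdot u^{y_t}_\gamma\right).
\end{equation*}
The first sum is the ``drift'' term (giving the asymptotic rate) and the second is the accumulated excess, controlled by the weak learning condition.

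For the drift, $\phi^{y_t}_N(\zerov)$ equals the probability that an $N$-step random walk with increments drawn from $u^{y_t}_\gamma$ ends with $\argmax\neq y_t$; since the expected margin of $y_t$ over any other label grows like $\gamma N$, a Hoeffding bound followed by a union bound over the $k-1$ wrong labels gives $\phi^{y_t}_N(\zerov)\leq(k-1)e^{-\gamma^2N/2}$, producing the leading $(k-1)e^{-\gamma^2N/2}T$ term. For the excess, I would first verify (as in \cite{jmulti}) that each potential cost vector attains its minimum at $y_t$, so that subtracting $c^i_{t,y_t}\onev$ and dividing by $w^i_t=\sum_l(c^i_{t,l}-c^i_{t,y_t})$ places it in $\costsOriginal$, and that the estimate $\chat^i_t$ from \eqref{eq:costEst}--\eqref{eq:costMatBMBM} is $\frac{k}{\rho}$-bounded. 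Applying BanditWLC($\gamma,\delta,S$) to each learner, taking a union bound over $i\in[N]$, and combining the $N$ guarantees through the same potential-weight accounting as in the full-information analysis of \cite{jmulti}---which already contributes the $\sqrt N$ and a $k^{5/2}$ factor---while the bandit feedback inflates each per-round deviation by the $\frac{k}{\rho}$ variance bound and $S=\tilde{O}(k/\rho)$ from \eqref{eq:SBound}, yields an excess of $\tilde{O}(k^{7/2}\sqrt N/\rho)$.

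Finally I would pass from the intermediate prediction $\yhat_t$ to the sampled prediction $\ytilde_t$. Since $p_t$ places mass $1-\rho$ on $\yhat_t$, we have $\Prob(\ytilde_t\neq y_t\mid\mathcal{F}_{t-1})\leq\Ind(\yhat_t\neq y_t)+\rho$, and an Azuma--Hoeffding bound on the martingale $\sum_t(\Ind(\ytilde_t\neq y_t)-\Prob(\ytilde_t\neq y_t\mid\mathcal{F}_{t-1}))$ gives, with probability $1-\delta$,
\begin{equation*}
\sumtt\Ind(\ytilde_t\neq y_t)\leq\sumtt\Ind(\yhat_t\neq y_t)+\rho T+O\!\left(\sqrt{T\log\tfrac{1}{\delta}}\right).
\end{equation*}
Bounding the square-root term by AM--GM as $\sqrt{T\log\frac{1}{\delta}}\leq\rho T+\frac{\log(1/\delta)}{4\rho}$ converts it into a second $\rho T$ (yielding the stated $2\rho T$) plus an $\tilde{O}(1/\rho)$ remainder absorbed into the last term. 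Combining this concentration event with the $N$ weak-learning events gives the overall failure probability $(N+1)\delta$.

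The main obstacle, and where the analysis genuinely departs from \cite{jmulti}, is controlling the randomness of the cost vectors. In the full-information proof the telescoping identity holds deterministically, whereas here $c^i_t$ is only $\E_{\ytilde_t}\chat^i_t$, so each use of the weak learning condition must absorb a deviation governed by the $\frac{k}{\rho}$ bound on $\zeroOneHat_t$; this is precisely what inflates the full-information excess by the factor $k/\rho$. Verifying that the $b=\frac{k}{\rho}$ bound survives multiplication by the potential matrix $C^i_t$ and division by $w^i_t$, tracking the resulting powers of $k$ through the normalization, and ensuring $S$ is uniform across learners so the union bound is valid, is the crux of the argument.
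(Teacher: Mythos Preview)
Your proposal is correct and follows essentially the same route as the paper: the identical potential telescoping (the paper writes it through the shifted vectors $d^i_t=c^i_t-c^i_{t,y_t}$ and weights $w^i_t=\|d^i_t\|_1$, but this is exactly your normalization), the application of BanditWLC to each of the $N$ learners with a union bound, the bounds $\phi^l_N(\zerov)\leq(k-1)e^{-\gamma^2N/2}$ and $\sum_i w^{i*}=O(k^{5/2}\sqrt N)$ imported from \cite{jmulti} combined with $S=\tilde{O}(k/\rho)$, and finally the passage from $\yhat_t$ to $\ytilde_t$ via Azuma--Hoeffding plus the AM--GM absorption of $\sqrt{T\log\frac{1}{\delta}}$ into $2\rho T$. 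The only cosmetic difference is that the paper normalizes by the uniform-in-$t$ bound $w^{i*}=\sup_t w^i_t$ (so that the BanditWLC weights lie in $[0,1]$) rather than by each $w^i_t$, which is what your phrase ``the same potential-weight accounting'' would need to unpack.
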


If we set the exploration rate $\rho = \frac{k^{7/4}N^{1/4}}{\sqrt T}$, then the bound becomes 
\begin{equation*}
    (k-1) e^{-\frac{\gamma^2 N}{2}} T + \tilde{O}(k^{7/4}N^{1/4}\sqrt T).
\end{equation*}
Dividing by $T$, we can infer that $(k-1) e^{-\frac{\gamma^2 N}{2}}$ is the asymptotic error bound of the algorithm. This bound matches the bound of the full information counterpart, OnlineMBBM. Since it depends exponentially on $N$, BanditBBM does not require too many weak learners to obtain a desired accuracy. \cite{jmulti} also provide a lower bound in the full information setting, which shows that the exponential decay is the fastest rate one can expect for the asymptotic error bound. This result applies to our bandit setting as it is harder.  

\subsection{An Adaptive Algorithm}
While BanditBBM is theoretically sound, in real applications it has a number of drawbacks. Firstly, it is hard to identify the edge $\gamma$ of each weak learner, leading to incorrect computations of the potential function. Also, each learner may have a different edge, and assuming a common edge can underestimate some weak learner's predictive power. Finally, as pointed out in the previous section, evaluating the potential function is computationally expensive, which makes BanditBBM less useful in practice. To address these issues, we propose an adaptive algorithm, AdaBandit, based on the full information adaptive algorithm, Adaboost.OLM by \cite{jmulti}. Using the idea of improper learning, \cite{foster2018logistic} proposed another adaptive boosting algorithm that has a tighter sample complexity than Adaboost.OLM. However, we stick to Adaboost.OLM as it has the competitive asymptotic error bound, which is of primary interest in this paper. 

\subsubsection{Logistic Loss and Empirical Edges}
Instead of directly minimizing the zero-one loss, the adaptive algorithm tries to minimize a surrogate loss. As in Adaboost.OLM, we choose the following logistic loss $\logLoss_y:\reals^k \rightarrow \reals$:
\begin{gather*}
\logLoss_y(s)=\sum_{l=1}^k\log(1+\exp(s_l - s_{y})),
\end{gather*}
where $s$ is the cumulative votes of a chosen expert. 

As for the zero-one loss, computing the loss requires knowledge of the true label, and we again use the idea in (\ref{eq:costEst}) to estimate the loss. We want to emphasize that the logistic loss only plays an intermediate role in training, and the learner's predictions are still evaluated by the zero-one loss. 

Essentially, we want to set the cost vector $c^i_t = \nabla \logLoss_{y_t}(s^{i-1}_t)$. Since this depends on the true label $y_t$, we build a cost matrix $C^i_t \in \reals^{k \times k}$ as below:
\begin{align}
\begin{split}
\label{eq:adaCost}
C^i_{t}[l, r] = 
\begin{cases}
\frac{1}{1+\exp(s^{i-1}_{t, r}-s^{i-1}_{t, l})} & \text{if }l\neq r \\
-\sum_{j\neq r}\frac{1}{1+\exp(s^{i-1}_{t, r}-s^{i-1}_{t, j})} & \text{if }l= r
\end{cases}.
\end{split}
\end{align}
Note that each column also puts the minimal cost on the correct label $r$. Moreover, the sum of entries equals zero. Using the idea described in \eqref{eq:costEst}, we can compute $\chat^i_t$, which is an unbiased estimate of $c^i_t = \nabla \logLoss_{y_t}(s^{i-1}_t)$. 

Even though the adaptive algorithm does not assume the BanditWLC, we still need to measure the weak learners' predictive powers to analyze the booster's performance. As in the full information case, we use the following \textit{empirical edge} of $WL^i$:
\begin{equation*}
    \gamma_i =
	\sumtt c^i_{t, h^i_t} / \sumtt c^i_{t, y_t}.
\end{equation*}
Having the same empirical edge as Adaboost.OLM allows us to precisely evaluate the cost of bandit feedback. Based on our design of cost vector $c^i_t = \nabla \logLoss_{y_t}(s^{i-1}_t)$, we can check that $\gamma_i$ is in $[-1, 1]$ and a larger value implies a better accuracy. Obviously, the empirical edge is unavailable to the learner as it requires the true cost vector $c^i_t$. This is fine because we only use this value to provide the mistake bound. Running AdaBandit does not require the knowledge of empirical edges. 

\subsubsection{AdaBandit Details}
Now we describe the details of AdaBandit (see Algorithm \ref{alg:OLBM}). The choice of cost vectors $\chat^i_t$ is already discussed in the previous section. As this is an adaptive algorithm, we update the learner weights $\alpha^i_t$ to give more influence to high-performing learners. We also allow negative weights in case a weak learner is worse than random. 

As AdaBandit incorporates the logistic loss as a surrogate, we want to pick $\alpha^i_t$ to minimize 
\begin{equation*}
    \sumtt f^i_t(\alpha^i_t) \text{ where } f^i_t(\alpha) = \logLoss_{y_t}(s^{i-1}_t + \alpha \stdv_{h^i_t}),
\end{equation*}
where only the following unbiased estimator $\fhat^i_t$ is available to the learner:
\begin{equation*}
    \fhat^i_t(\alpha) = \sum_{j=1}^k \logLoss_j(s^{i-1}_t + \alpha \stdv_{h^i_t}) \cdot (1 - \zeroOneHat_{t, j}).
\end{equation*}
Since the logistic loss is convex, it is a classical online convex optimization problem, and we can use stochastic gradient descent (see \cite{zink} and \cite{shalev2014understanding}). Following the convention in Adaboost.OLM, we use the feasible set $F = [-2, 2]$ and the projection function $\Pi(\cdot) = \max\{-2, \min\{2, \cdot\}\}$ to update $\alpha^i_t$:
\begin{equation}
\label{eq:alphaUpdate}
    \alpha^i_{t+1} = \Pi(\alpha^i_t - \eta_t {\fhat^{i\prime}_t}(\alpha^i_t)),
\end{equation}
where $\eta_t$ is a learning rate. As the gradient of the logistic loss is universally bounded by $k$ and $\zeroOneHat_t$ is $\frac{k}{\rho}$-bounded, we can check that $|{\fhat^{i\prime}_t}(\alpha)| \leq \frac{2k^2}{\rho}$ almost surely. From this, if we set $\eta_t = \frac{\rho}{k^2\sqrt t}$, then a standard result in online stochastic gradient descent (see \cite{shalev2014understanding}, Chapter 14) provides with probability $1-\delta$,
\begin{equation}
\label{eq:convexRegret}
    \sumtt f^i_t (\alpha^i_t) 
    \leq
    \min_{\alpha \in [-2, 2]} \sumtt f^i_t(\alpha) + \tilde{O}(\frac{k^2}{\rho}\sqrt T),
\end{equation}
where $\tilde{O}$ suppresses dependence on $\log \frac{1}{\delta}$. 

We cannot prove that the last expert is the best because our weak learners do not adhere to the weak learning condition. Instead, we will show that at least one expert is reliable. To identify this expert, we use the \textit{Hedge algorithm} from \cite{hedge1} and \cite{hedge2}. This algorithm generally receives the zero-one loss of each expert. Since that is no longer available, we will feed $\zeroOneHat$, which in expectation reflects the true zero-one loss. As the exploration rate $\rho$ controls the variance of the loss estimate, we can combine the analysis of the Hedge algorithm with the concentration inequality to obtain a similar result. 

\begin{algorithm}[t]
	\begin{algorithmic}[1]
        \setcounter{ALC@line}{1}	   
		\STATE \textbf{Initialize:} Set $\alpha^{i}_{1}=0$ and $v^i_t=1$ for $i \in [N]$
        \setcounter{ALC@line}{6}	   
		\STATE Randomly draw $i_t$ with $\Prob (i_t = i) \propto v^i_t$
        \setcounter{ALC@line}{12}
		\STATE Update weights $\alpha^{i}_{t}$ using (\ref{eq:alphaUpdate}) with $\eta_t = \frac{\rho}{k^2\sqrt{t}}$
		\STATE Compute cost vectors $\chat^{i}_{t}$ using (\ref{eq:costEst}) and (\ref{eq:adaCost})
        \setcounter{ALC@line}{16}
		\STATE Update $v^i_{t+1} = v^i_t \cdot \exp(-\zeroOneHat_{t, \yhat^i_t})$
	\end{algorithmic}
	\caption{AdaBandit Details}
	\label{alg:OLBM}
\end{algorithm}

\subsubsection{Mistake Bound of AdaBandit}
As mentioned earlier, we bound the number of mistakes made by the adaptive algorithm using the weak learners' empirical edges. We emphasize again that these empirical edges are defined exactly in the same manner with those used in the full information bound.

\begin{theorem}[Mistake Bound of AdaBandit]
\label{thm:OLBM}
For any $T$, $N$ satisfying $\delta \ll \frac{1}{N}$,
the number of mistakes made by AdaBandit satisfies the following inequality with probability at least $1-(N+4) \delta$:
\begin{equation*}
    \sumtt \Ind(\ytilde_t \neq y_t)
    \leq
    \frac{8k}{\sum_{i=1}^N \gamma_i^2}T
	+ 2\rho T + \tilde{O}(\frac{k^3N^2}{\rho^2\sum_{i=1}^N \gamma_i^2}),
\end{equation*}
where $\tilde{O}$
suppresses dependence on $\log \frac{1}{\delta}$.
\end{theorem}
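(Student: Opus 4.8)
The plan is to mirror the full-information analysis of Adaboost.OLM and isolate the three places where bandit feedback changes the argument: the randomized final prediction, the use of estimated losses inside Hedge, and the inflated regret of the online gradient step. First I would peel off the randomization. Since $\ytilde_t\sim p_t$ with $p_{t,y_t}=1-\rho$ when $\yhat_t=y_t$ and $p_{t,y_t}=\rho/(k-1)$ otherwise (see \eqref{pt_definition}), one has $\E[\Ind(\ytilde_t\neq y_t)\mid\yhat_t]=1-p_{t,y_t}\le\Ind(\yhat_t\neq y_t)+\rho$. Applying a Freedman/Azuma bound to the $[-1,1]$-bounded martingale differences $\Ind(\ytilde_t\neq y_t)-(1-p_{t,y_t})$ gives, with probability $1-\delta$,
\begin{equation*}
\sumtt\Ind(\ytilde_t\neq y_t)\le\sumtt\Ind(\yhat_t\neq y_t)+\rho T+\tilde O(\sqrt T),
\end{equation*}
and since $\tilde O(\sqrt T)\le\rho T+\tilde O(1/\rho)$ by AM-GM, this step is what produces the $2\rho T$ exploration term. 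It then remains to control the intermediate prediction $\yhat_t=\yhat^{i_t}_t$.

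Next I would handle the Hedge selection. Because $i_t$ is drawn with $\Prob(i_t=i)\propto v^i_t$ from exponential weights run on the estimated expert losses $\zeroOneHat_{t,\yhat^i_t}$, I would combine the exponential-weights regret bound with a concentration step replacing the estimates by their conditional means $\Ind(\yhat^i_t\neq y_t)$. The subtlety is that the fed losses are only $\frac{k}{\rho}$-bounded, so a range-based regret would scale like $\frac{k}{\rho}\sqrt T$; instead I would use the variance form of the Hedge bound (the per-round second moment of $\zeroOneHat_{t,\yhat^i_t}$ is $O(k/\rho)$ since $\E[1/p_{t,\ytilde_t}^2\cdot\Ind(\ytilde_t=y_t)]\le(k-1)/\rho$) together with a Bernstein-type martingale inequality, yielding $\sumtt\Ind(\yhat_t\neq y_t)\le\min_{i\in[N]}\sumtt\Ind(\yhat^i_t\neq y_t)+\tilde O(\sqrt{kT/\rho})$, a lower-order $\sqrt T$ term that will be absorbed.

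The core is the logistic-loss telescoping, which is inherited verbatim from the full-information argument because the empirical edges $\gamma_i$ are defined through the true cost vectors $c^i_t=\nabla\logLoss_{y_t}(s^{i-1}_t)$ exactly as in Adaboost.OLM. Set $\Phi^i=\sumtt\logLoss_{y_t}(s^i_t)$, so $\Phi^0=kT\log2$ and $\Phi^N\ge0$. Since $s^i_t=s^{i-1}_t+\alpha^i_t\stdv_{h^i_t}$ we have $\Phi^i=\sumtt f^i_t(\alpha^i_t)$ and $\Phi^{i-1}=\sumtt f^i_t(0)$, so \eqref{eq:convexRegret} gives $\Phi^i\le\min_{\alpha\in[-2,2]}\sumtt f^i_t(\alpha)+\tilde O(\frac{k^2}{\rho}\sqrt T)$. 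Writing $w^i_t=-c^i_{t,y_t}\ge0$ and $W_i=\sumtt w^i_t$, the gradient of $f^i_t$ at $0$ is $c^i_{t,h^i_t}$, so $\sumtt c^i_{t,h^i_t}=-\gamma_i W_i$ by definition of the edge; bounding the logistic curvature by $O(w^i_t)$ on $[-2,2]$ gives the per-stage drop $\min_\alpha\sumtt f^i_t(\alpha)\le\Phi^{i-1}-\Omega(\gamma_i^2 W_i)$. Telescoping and using $\Phi^N\ge0$ yields $\sum_i\gamma_i^2 W_i\lesssim kT+N\tilde O(\frac{k^2}{\rho}\sqrt T)$. Because a mistake of expert $i-1$ forces $w^i_t\ge\frac12$, we get $\sumtt\Ind(\yhat^{i-1}_t\neq y_t)\le2W_i$, and together with the averaging inequality $\min_i W_i\le\frac{\sum_i\gamma_i^2 W_i}{\sum_i\gamma_i^2}$ this gives
\begin{equation*}
\min_i\sumtt\Ind(\yhat^i_t\neq y_t)\lesssim\frac{kT+N\tilde O(k^2\sqrt T/\rho)}{\sum_i\gamma_i^2}.
\end{equation*}
Chaining the three displays and splitting the dominant term by AM-GM,
\begin{equation*}
\frac{Nk^2\sqrt T}{\rho\sum_i\gamma_i^2}=\sqrt{\frac{kT}{\sum_i\gamma_i^2}\cdot\frac{k^3N^2}{\rho^2\sum_i\gamma_i^2}}\le\frac12\frac{kT}{\sum_i\gamma_i^2}+\frac12\frac{k^3N^2}{\rho^2\sum_i\gamma_i^2},
\end{equation*}
produces exactly the $\frac{8k}{\sum_i\gamma_i^2}T$ leading term and the claimed $T$-free remainder; a union bound over the $N$ gradient-descent events and the four auxiliary concentration events gives probability $1-(N+4)\delta$.

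The hard parts will be two. The first is the per-stage improvement lemma: I must lower-bound $\Phi^{i-1}-\min_{\alpha\in[-2,2]}\sumtt f^i_t(\alpha)$ by $\Omega(\gamma_i^2 W_i)$, which requires a separate treatment of the regime where the unconstrained logistic minimizer falls outside $[-2,2]$, and it is here that the constant $8$ and the feasible set $[-2,2]$ are calibrated. The second is controlling Hedge when the fed losses are only $\frac{k}{\rho}$-bounded, where the variance-based regret bound is essential to keep the remainder from blowing up. Everything else is the routine transfer of the Adaboost.OLM telescoping, which carries over because the empirical edges are identical to the full-information case.
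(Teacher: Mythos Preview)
Your proposal is correct and follows the same skeleton as the paper: peel off the randomization, control the Hedge selection, telescope the logistic loss with a per-stage drop proportional to $\gamma_i^2 W_i$, use $W_i\ge\tfrac12 M_{i-1}$, and finish with AM--GM. Two steps differ in execution. First, for Hedge you reach for a variance/Bernstein bound to get $\tilde O(\sqrt{kT/\rho})$; the paper instead applies the plain range-based multiplicative-weights inequality (Cesa-Bianchi--Lugosi, Corollary~2.3), which costs a factor $2$ in front of $\min_i M_i$ and an additive $\tilde O(\tfrac{k}{\rho}\sqrt T)$. This is harmless because the OGD remainder $\tilde O(\tfrac{k^2N}{\rho}\sqrt T)$ dominates it, and the factor $2$ is exactly what turns the $4(k-1)\log 2$ coming from the telescoping into the stated $8k$. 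Your second-moment route would actually improve the constant, but is unnecessary and would need to be reconciled with the algorithm's fixed unit learning rate in the exponential update. Second, for the per-stage improvement you sketch a curvature argument with a case split on whether the unconstrained minimizer lies in $[-2,2]$; the paper avoids this entirely by the inequality $\log(1+e^{s+\alpha})-\log(1+e^s)\le(e^\alpha-1)/(1+e^{-s})$, which reduces the problem to $\min_{\alpha\in[-2,2]}A(e^\alpha-1)+B(e^{-\alpha}-1)$ with $A,B\ge0$, $A+B\le1$, $B-A=\gamma_i$, and then a one-line algebraic lemma gives $\le-\gamma_i^2/2$ directly over the constrained interval. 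This is where the feasible set $[-2,2]$ and the constant $8$ are calibrated without any case analysis, and is the cleaner route for the step you flagged as hard.
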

If we set the exploration rate $\rho = \frac{kN^{2/3}}{(T\sum_{i=1}^N \gamma_i^2)^{1/3}}$, then the bound becomes 
\begin{equation*}
    \frac{8k}{\sum_{i=1}^N \gamma_i^2}T
    + \tilde{O}(\frac{kN^{\frac{2}{3}}}{(\sum_{i=1}^N \gamma_i^2)^{\frac{1}{3}}} T^{\frac{2}{3}}). 
\end{equation*}
This implies that $\frac{8k}{\sum_{i=1}^N \gamma_i^2}$ becomes the asymptotic error bound of AdaBandit, which matches the bound of Adaboost.OLM. \cite{jmulti} observe that $\gamma_i \geq \gamma$ with high probability if the learner has edge $\gamma$. Therefore, if our weak learners satisfy BanditWLC($\gamma, \delta, S$) as for BanditBBM, then the asymptotic bound becomes roughly $\frac{8k}{N\gamma^2}$. The bound depends polynomially on $N$, which is suboptimal. However, AdaBandit resolves the aforementioned issues of BanditBBM and actually shows comparable results on real data sets.

\section{EXPERIMENTS}
\label{experiments}

\begin{table*}[t]
\caption{Bandit and Full Information Asymptotic Performance} \label{asymptotic_table}
\begin{center}
\begin{tabular}{lccccccc}
\toprule
Data  &$k$ & StreamCnt  &OptBandit &AdaBandit &BinBandit &OptFull &AdaFull \\
\midrule
Balance         &3  &6250       &0.89   &0.97   &0.80   &0.76   &0.93\\
Car             &4  &10368      &0.88   &0.98   &0.84   &0.84   &0.96\\
Nursery         &4  &51840      &0.93   &0.95   &0.92   &0.94   &0.98\\
Movement        &5  &165631     &0.94   &0.97   &0.89   &0.92   &0.97\\
Mice            &8  &8640       &0.73   &0.87   &0.81   &0.81   &0.96\\
Isolet          &26 &116955     &0.48   &0.66   &0.66   &0.84  &0.90\\
\bottomrule
\end{tabular}
\end{center}
\end{table*}

We compare various boosting algorithms on benchmark data sets using publicly available code\footnote{ \url{https://github.com/pi224/banditboosting}}. The models include our proposed algorithms, BanditBBM and AdaBandit, their full information versions, OnlineMBBM and Adaboost.OLM from \cite{jmulti}, and BanditBoost from \cite{binmulti}. To maximize readability, we will call them by OptBandit, AdaBandit, OptFull, AdaFull, and BinBandit respectively, based on their characteristics. The first four models require multiclass weak learners, whereas BinBandit needs binary learners. For every model, we use online decision trees proposed by \cite{trees} as weak learners. 

We examine several data sets from the UCI data repository \citep{UCI1998,higuera2015self,ugulino2012wearable} that are tested by \cite{jmulti}. We follow the authors' data preprocessing to provide a consistent comparison. However, the bandit algorithms need more samples to reach their asymptotic performance. Because these data sets often have insufficient examples to yield this asymptotic performance, we duplicate and shuffle the data sets a number of times before feeding them to the algorithm. The amount of duplication done to each data set is chosen to suggest the asymptotic performance of each algorithm. Table \ref{asymptotic_table} contains a summary of data sets that are examined. The number of actual data points sent to each model is noted under the column StreamCnt. 

We optimize the number of weak learners $N$ for each bandit algorithm and data set, with granularity down to multiples of $5$. As BinBandit only takes binary weak learners, it needs more of them for data sets with large $k$. Thus, we use $10k$ weak learners for BinBandit on each data set. Recall that OptBandit and OptFull require the knowledge of the edge $\gamma$ from their weak learning condition. Since one cannot identify this value in practice, we also do not optimize this value and select $\gamma=0.1$ to be fixed. Lastly, the three bandit algorithms have the exploration rate $\rho$, which we optimize through the grid search and record the best results. A more detailed description of the experiment setting appears in Appendix \ref{experiments_appendix}.

\subsection{Asymptotic Performance}
\label{asymptotics}

Since the theoretical asymptotic error bounds of the proposed algorithms match their full information counterparts, we first compare the models' empirical asymptotic performance. To do so, we feed the first $80\%$ of the data without counting mistakes and compute the average accuracy on the last $20\%$ of the data. Table \ref{asymptotic_table} summarizes the results. The accuracy is averaged over 20 rounds for all data sets except Isolet and Movement, which we ran 10 times. These runs were computed with shuffling from 20 random seeds, a predetermined subset of which were used for Isolet and Movement. 

The full information algorithms exhibit very strong performance due to data duplication. Despite this, OptBandit and AdaBandit are quite competitive against them across data sets with smaller $k$. For datasets with larger $k$, our bandit algorithms do not keep up as well, as they receive less feedback per instance. Our algorithm's perform comparably to BinBandit, showing that our algorithms successfully combine the multiclass weak learners. A noteworthy aspect is how AdaBandit outperforms OptBandit on all the data sets, showing adaptive weighing's power. 

\subsection{Analyzing Learning Curves}


Even though our bandit algorithms have the same asymptotic error bounds as their full information counterparts, the cost of bandit feedback is reflected in larger sample complexities. Investigating this, we compute approximate learning curves for these algorithms by recording the moving average accuracy across the latest \textit{$0.2 \times$(total rounds to be run)} data instances. For data sets of varying $k$ this illustrates the hardness of the bandit problem: as $k$ increases, learning an appropriately performing hypothesis takes longer, but is achievable nonetheless. 

\begin{figure}[t]
\centering
\includegraphics[width=0.9\columnwidth]{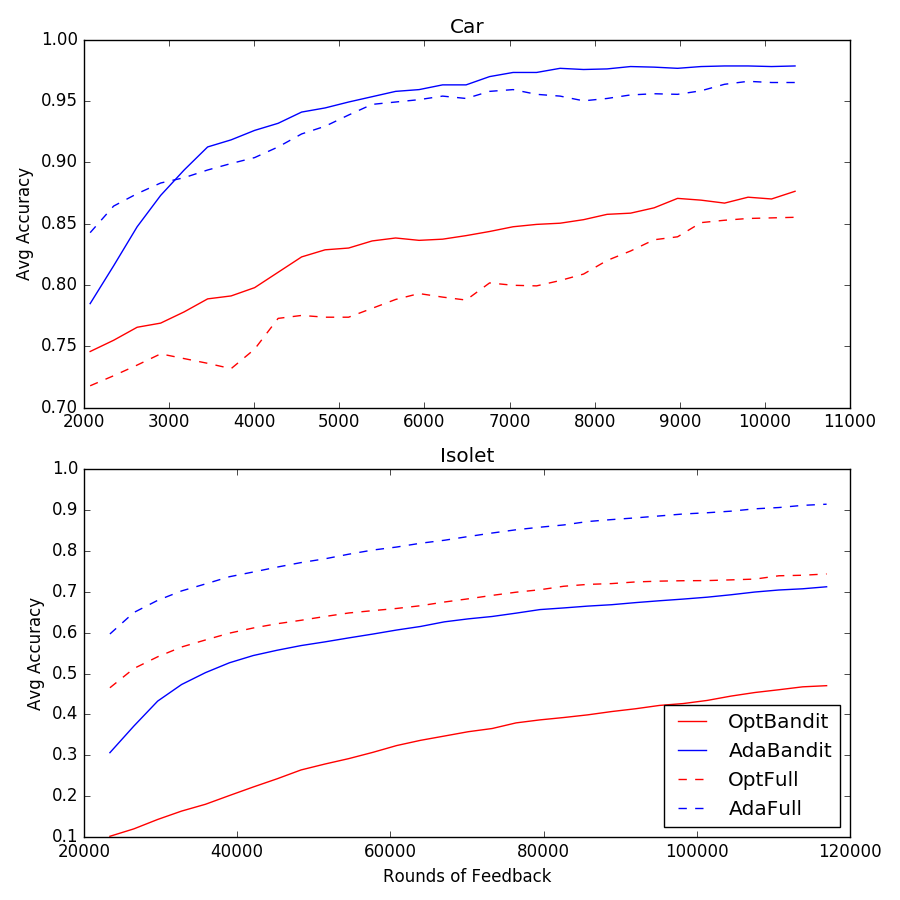}
\caption{Learning Curves on Car (Top) and Isolet (Bottom); Best Viewed in Color}
\label{fig1}
\end{figure}

Figure \ref{fig1} shows the learning curves on Car and Isolet data for our two bandit algorithms as compared with their full information counterparts. The curves for other data sets can be found in the Appendix \ref{experiments_appendix}. On Car data where $k$ is small, AdaBandit even outperforms OptFull and AdaFull by the end. This competitiveness with full information algorithms is reflected in the other learning curves in the appendix. On Isolet data with large $k$, our bandit methods lose some of their competitiveness. Given that the exploration rate $\rho$ is set to $0.1$ (whereas the theory would have it converge to $0$) and that the bandit algorithms have not fully plateaued off, the performance is still reasonable.

\subsubsection*{Acknowledgements}
AT acknowledges the support of the NSF CAREER grant IIS-1452099, a Sloan Research Fellowship, and the LSA Associate Professor Support Fund.

\bibliography{citations} 

\begin{thebibliography}{}

\bibitem[\protect\astroncite{Beygelzimer et~al.}{2015}]{beygelzimer2015optimal}
Beygelzimer, A., Kale, S., and Luo, H. (2015).
\newblock Optimal and adaptive algorithms for online boosting.
\newblock In {\em Proceedings of the International Coference on Machine
  Learning}, pages 2323--2331.

\bibitem[\protect\astroncite{Beygelzimer
  et~al.}{2017}]{beygelzimer2017efficient}
Beygelzimer, A., Orabona, F., and Zhang, C. (2017).
\newblock Efficient online bandit multiclass learning with
  $\tilde{O}(\sqrt{T})$ regret.
\newblock In {\em Proceedings of the International Conference on Machine
  Learning}, pages 488--497.

\bibitem[\protect\astroncite{Blake and Merz}{1998}]{UCI1998}
Blake, C. and Merz, C. (1998).
\newblock {UCI} machine learning repository.

\bibitem[\protect\astroncite{Cesa-Bianchi and
  Lugosi}{2006}]{cesa2006prediction}
Cesa-Bianchi, N. and Lugosi, G. (2006).
\newblock {\em Prediction, learning, and games}.
\newblock Cambridge university press.

\bibitem[\protect\astroncite{Chen et~al.}{2012}]{chen2012online}
Chen, S.-T., Lin, H.-T., and Lu, C.-J. (2012).
\newblock An online boosting algorithm with theoretical justifications.
\newblock In {\em Proceedings of the International Coference on Machine
  Learning}, pages 1873--1880. Omnipress.

\bibitem[\protect\astroncite{Chen et~al.}{2014}]{binmulti}
Chen, S.-T., Lin, H.-T., and Lu, C.-J. (2014).
\newblock Boosting with online binary learners for the multiclass bandit
  problem.
\newblock In {\em Proceedings of the International Conference on Machine
  Learning}, pages 342--350.

\bibitem[\protect\astroncite{Daniely et~al.}{2015}]{daniely2015multiclass}
Daniely, A., Sabato, S., Ben-David, S., and Shalev-Shwartz, S. (2015).
\newblock Multiclass learnability and the erm principle.
\newblock {\em The Journal of Machine Learning Research}, 16(1):2377--2404.

\bibitem[\protect\astroncite{Domingos and Hulten}{2000}]{trees}
Domingos, P. and Hulten, G. (2000).
\newblock Mining high-speed data streams.
\newblock In {\em Proceedings of the ACM SIGKDD international conference on
  Knowledge discovery and data mining}, pages 71--80. ACM.

\bibitem[\protect\astroncite{Foster et~al.}{2018}]{foster2018logistic}
Foster, D.~J., Kale, S., Luo, H., Mohri, M., and Sridharan, K. (2018).
\newblock Logistic regression: The importance of being improper.
\newblock {\em Proceedings of Machine Learning Research}, 75:1--42.

\bibitem[\protect\astroncite{Freund and Schapire}{1997}]{hedge2}
Freund, Y. and Schapire, R.~E. (1997).
\newblock A decision-theoretic generalization of on-line learning and an
  application to boosting.
\newblock {\em Journal of computer and system sciences}, 55(1):119--139.

\bibitem[\protect\astroncite{Higuera et~al.}{2015}]{higuera2015self}
Higuera, C., Gardiner, K.~J., and Cios, K.~J. (2015).
\newblock Self-organizing feature maps identify proteins critical to learning
  in a mouse model of down syndrome.
\newblock {\em PloS one}, 10(6):e0129126.

\bibitem[\protect\astroncite{Jung et~al.}{2017}]{jmulti}
Jung, Y.~H., Goetz, J., and Tewari, A. (2017).
\newblock Online multiclass boosting.
\newblock In {\em Advances in Neural Information Processing Systems}, pages
  919--928.

\bibitem[\protect\astroncite{Jung and Tewari}{2018}]{jrank}
Jung, Y.~H. and Tewari, A. (2018).
\newblock Online boosting algorithms for multi-label ranking.
\newblock In {\em International Conference on Artificial Intelligence and
  Statistics}, pages 279--287.

\bibitem[\protect\astroncite{Kakade et~al.}{2008}]{banditron}
Kakade, S.~M., Shalev-Shwartz, S., and Tewari, A. (2008).
\newblock Efficient bandit algorithms for online multiclass prediction.
\newblock In {\em Proceedings of the International Conference on Machine
  Learning}, pages 440--447. ACM.

\bibitem[\protect\astroncite{Littlestone and Warmuth}{1994}]{hedge1}
Littlestone, N. and Warmuth, M.~K. (1994).
\newblock The weighted majority algorithm.
\newblock {\em Information and computation}, 108(2):212--261.

\bibitem[\protect\astroncite{Mukherjee and Schapire}{2013}]{offmult}
Mukherjee, I. and Schapire, R.~E. (2013).
\newblock A theory of multiclass boosting.
\newblock {\em Journal of Machine Learning Research}, 14(Feb):437--497.

\bibitem[\protect\astroncite{Shalev-Shwartz and
  Ben-David}{2014}]{shalev2014understanding}
Shalev-Shwartz, S. and Ben-David, S. (2014).
\newblock {\em Understanding machine learning: From theory to algorithms}.
\newblock Cambridge university press.

\bibitem[\protect\astroncite{Ugulino et~al.}{2012}]{ugulino2012wearable}
Ugulino, W., Cardador, D., Vega, K., Velloso, E., Milidi{\'u}, R., and Fuks, H.
  (2012).
\newblock Wearable computing: Accelerometers? data classification of body
  postures and movements.
\newblock In {\em Advances in Artificial Intelligence-SBIA}, pages 52--61.
  Springer.

\bibitem[\protect\astroncite{Zinkevich}{2003}]{zink}
Zinkevich, M. (2003).
\newblock Online convex programming and generalized infinitesimal gradient
  ascent.
\newblock In {\em Proceedings of the International Conference on Machine
  Learning}, pages 928--936.

\end{thebibliography}

\clearpage
\newpage

\begin{appendices}

\section{DETAILED PROOFS}
In this section, we include the full proofs that are omitted in the main manuscript. 

\subsection{Unbiased Estimate of The Zero-One Loss}
We prove the unbaisedness of our loss estimator presented in \eqref{lhat}.
\label{section:unbiased}
\begin{lemma}
\label{lemma:unbiased}
The estimator $\zeroOneHat_t$ in \eqref{lhat} is an unbiased estimator of the zero-one loss $\zeroOne_t$:
\begin{equation*}
    \E_{\ytilde_t \sim p_t} \zeroOneHat_t = \zeroOne_t.
\end{equation*}
\end{lemma}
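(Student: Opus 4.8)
The plan is to prove the identity coordinate-by-coordinate. Since $\zeroOne_t = \onev - \stdv_{y_t}$ has $i$-th entry $\zeroOne_{t,i} = \Ind(i \neq y_t)$, and expectation is linear, it suffices to fix an arbitrary coordinate $i \in [k]$ and show $\E_{\ytilde_t \sim p_t}\, \zeroOneHat_{t,i} = \Ind(i \neq y_t)$. Throughout, $y_t$ and $\yhat_t$ are treated as fixed (the lemma's expectation is taken only over $\ytilde_t \sim p_t$), so the only randomness to integrate out lives in the two importance-weighting factors appearing in~\eqref{lhat}.

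The first computational step is to evaluate these inverse-propensity factors. For any fixed target label $c \in [k]$, a direct calculation gives
\begin{equation*}
    \E_{\ytilde_t \sim p_t}\!\left[\frac{\Ind(\ytilde_t = c)}{p_{t,\ytilde_t}}\right]
    = \sum_{j=1}^k p_{t,j}\,\frac{\Ind(j = c)}{p_{t,j}}
    = \sum_{j=1}^k \Ind(j = c) = 1,
\end{equation*}
which is exactly the standard unbiasedness of importance weighting and requires only that $p_{t,c} > 0$ (guaranteed by the definition~\eqref{pt_definition} as long as $\rho \in (0,1)$). Applying this with $c = y_t$ to the first summand of~\eqref{lhat} and with $c = \yhat_t$ to the second, and pulling the remaining deterministic indicators out of the expectation, yields
\begin{equation*}
    \E_{\ytilde_t}\, \zeroOneHat_{t,i}
    = \Ind(y_t \neq i)\,\Ind(\yhat_t \neq i)
    + \Ind(\yhat_t \neq y_t)\,\Ind(\yhat_t = i).
\end{equation*}

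It then remains to verify the purely deterministic identity
\begin{equation*}
    \Ind(y_t \neq i)\,\Ind(\yhat_t \neq i)
    + \Ind(\yhat_t \neq y_t)\,\Ind(\yhat_t = i)
    = \Ind(i \neq y_t),
\end{equation*}
which I would establish by a short case analysis on the relationships among $i$, $y_t$, and $\yhat_t$. When $i = y_t$ both summands vanish (the first because $\Ind(y_t \neq i)=0$, the second because $\Ind(\yhat_t \neq y_t)$ and $\Ind(\yhat_t = i)$ cannot both be one), matching $\Ind(i\neq y_t)=0$. When $i \neq y_t$, one checks that exactly one summand equals one: if $\yhat_t = y_t$ the first summand contributes, if $\yhat_t \neq y_t$ and $i = \yhat_t$ the second summand contributes, and if $\yhat_t \neq y_t$ with $i \neq \yhat_t$ the first summand again contributes. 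The main (and only real) obstacle is keeping this case split exhaustive and mutually exclusive; the two terms of~\eqref{lhat} are engineered precisely so that the two mistake-versus-correct branches partition the event $\{i \neq y_t\}$ without overlap, and carrying out the four subcases cleanly is where the care is needed. Once the identity holds, combining it with the expectation computation above completes the proof.
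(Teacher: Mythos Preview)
Your proof is correct and follows essentially the same approach as the paper's: both work coordinate-wise, use the inverse-propensity identity to reduce $\E_{\ytilde_t}\zeroOneHat_{t,i}$ to $\Ind(y_t\neq i)\Ind(\yhat_t\neq i)+\Ind(\yhat_t\neq y_t)\Ind(\yhat_t=i)$, and then simplify this to $\Ind(y_t\neq i)$. The only cosmetic difference is that the paper avoids your case analysis via a one-line algebraic manipulation---rewriting $\Ind(\yhat_t\neq y_t)\Ind(\yhat_t=i)$ as $\Ind(y_t\neq i)\Ind(\yhat_t=i)$ and then factoring out $\Ind(y_t\neq i)$ from both terms.
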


\begin{proof}
Since $\ytilde$ is drawn with respect to $p_t$, we can write
\begin{align*}
    \E_{\ytilde_t \sim p_t} \zeroOneHat_{t, i}
    &= \Ind(y_t\neq i)\Ind(\yhat_t\neq i) 
    + \Ind(\yhat_t\neq y_t)\Ind(\yhat_t=i) \\
    &= \Ind(y_t\neq i)\Ind(\yhat_t\neq i) 
    + \Ind(i\neq y_t)\Ind(\yhat_t=i) \\
    &= \Ind(y_t\neq i)(\Ind(\yhat_t\neq i) + \Ind(\yhat_t=i)) \\
    &= \Ind(y_t\neq i),
\end{align*}
where the last term is $\zeroOne_{t, i}$, which completes the proof.
\end{proof}

\subsection{Proof of Theorem \ref{bridge}}
Since the agnostic learnability condition is given with deterministic cost vectors, we need to bridge the deterministic costs with the randomized ones. Observe that $\chat^i_t$ relies solely on the random draw of $\ytilde_t$ at each round. Therefore, the partial sum of random vectors $S_j = \sum_{t=1}^j \chat^i_t - c^i_t$ has the martingale property. Then we can prove the following lemma using the Azuma-Hoeffding inequality.

\begin{lemma}
\label{boundingLemma}
Suppose the random cost vectors $\chat_t$ are $b$-bounded. Let $p_t$ be a probability vector in $\Delta_k$. Then the following inequality holds with probability $1-\delta$:
\begin{equation*}
    |\sumtt (\chat_{t} - c_{t})\cdot p_t| 
    \leq b \sqrt{2T\log\frac{2}{\delta}}.
\end{equation*}
\end{lemma}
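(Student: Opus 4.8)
The plan is to follow the hint in the paragraph preceding the lemma: recognize the scalar partial sums as a martingale and then apply the two-sided Azuma--Hoeffding inequality. First I would set up the filtration. Let $\mathcal{F}_{t-1}$ be the $\sigma$-algebra generated by all randomness through the end of round $t-1$, in particular including the draws $\ytilde_1, \dots, \ytilde_{t-1}$. The key structural observation is that both the sampling distribution $p_t$ and the deterministic cost vector $c_t = \E\chat_t$ are determined \emph{before} $\ytilde_t$ is drawn, so they are $\mathcal{F}_{t-1}$-measurable, whereas the randomness in $\chat_t$ enters only through $\ytilde_t$. I would then define the scalar increments $X_t = (\chat_t - c_t) \cdot p_t$ and the partial sums $M_j = \sum_{t=1}^{j} X_t$.

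Using the conditional unbiasedness $\E[\chat_t \mid \mathcal{F}_{t-1}] = c_t$, which holds by construction since $c_t = \E\chat_t$, together with the $\mathcal{F}_{t-1}$-measurability of $p_t$, I would verify that $\E[X_t \mid \mathcal{F}_{t-1}] = (\E[\chat_t \mid \mathcal{F}_{t-1}] - c_t) \cdot p_t = 0$, so that $(M_j)$ is a martingale with respect to $(\mathcal{F}_j)$. Next I would bound the increments: since $\chat_t$ is $b$-bounded we have $\norm{\chat_t - c_t}_\infty \le b$ almost surely, and because $p_t \in \Delta_k$ satisfies $\norm{p_t}_1 = 1$, Hölder's inequality gives $|X_t| = |(\chat_t - c_t)\cdot p_t| \le \norm{\chat_t - c_t}_\infty \norm{p_t}_1 \le b$ almost surely.

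Finally I would invoke the two-sided Azuma--Hoeffding inequality for a martingale whose increments are bounded by $b$: $\Prob(|M_T| \ge \lambda) \le 2\exp\!\big(-\lambda^2 / (2Tb^2)\big)$, using $\sum_{t=1}^{T} b^2 = Tb^2$. Setting the right-hand side equal to $\delta$ and solving yields $\lambda = b\sqrt{2T\log\frac{2}{\delta}}$, which is exactly the claimed bound holding with probability at least $1 - \delta$ (the factor $2$ inside the logarithm is precisely the price of the two-sided tail).

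The main obstacle, and the only step I would treat carefully, is establishing the martingale difference property rigorously: namely verifying the predictability of $p_t$ (it is computed from the intermediate prediction $\yhat_t$, which depends only on weak-learner outputs and weights formed from past rounds, hence is $\mathcal{F}_{t-1}$-measurable) and the conditional unbiasedness $\E[\chat_t \mid \mathcal{F}_{t-1}] = c_t$. Once these are in place, the increment bound via Hölder and the Azuma--Hoeffding tail are entirely routine.
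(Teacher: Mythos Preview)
Your proposal is correct and follows essentially the same route as the paper: the paper also verifies that each increment $(\chat_t - c_t)\cdot p_t$ is mean-zero and bounded by $b$ almost surely, then applies the two-sided Azuma--Hoeffding inequality and solves for the threshold. Your version is more careful about the filtration and the H\"older step, but the argument is identical in substance.
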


\begin{proof}
Since $\chat_t$ is $b$-bounded and unbiased, we have
\begin{equation*}
    |(\chat_{t} - c_{t})\cdot p_t| \leq b 
    \text{ a.s. and } 
    \E (\chat_{t} - c_{t})\cdot p_t = 0.
\end{equation*}
Therefore the Azuma-Hoeffding's inequality implies
\begin{align*}
    \Prob(|\sumtt (\chat_{t} - c_{t})\cdot p_t| \geq \epsilon)
    \leq 
    2 e^{-\frac{\epsilon^2}{2b^2 T}}.
\end{align*}
Putting $\epsilon = b \sqrt{2T\log\frac{2}{\delta}}$ finishes the proof.
\end{proof}

We now go into the main proof of Theorem \ref{bridge}.

\begin{proof}
Fix the sequence of cost vectors $w_t c_t$. From the richness condition with edge $2\gamma$, we know
\begin{equation*}
    \inf_{h \in \mathcal{H}} \sumtt w_t c_{t,h(x_t)} 
    \leq 
    \sumtt w_t c_t \cdot u^{y_t}_{2\gamma}.
\end{equation*}
By applying Lemma \ref{boundingLemma} with $p_t = \stdv_{h(x_t)}$, we get with probability $1-\delta$,
\begin{equation}
    \label{eq:bounding1}
    \inf_{h \in \mathcal{H}} \sumtt w_t \chat_{t,h(x_t)} 
    \leq 
    \sumtt w_t c_t \cdot u^{y_t}_{2\gamma} + b \sqrt{2T\log\frac{2}{\delta}}.
\end{equation}
Then by the online learnability condition, the online learner based on $\weakspace$ can generate predictions $\yhat_t$ that satisfies the following inequality with probability $1-\delta$: 
\begin{gather*}
	\sumtt w_t \chat_{t,\yhat_t} \leq \inf_{h \in \mathcal{H}} 
		\sumtt w_t \chat_{t,h(x_t)} + R_\delta (T),
\end{gather*}
Using (\ref{eq:bounding1}) and the union bound, we have with probability $1-2\delta$,
\begin{align*}
	\sumtt w_t \chat_{t,\yhat_t} 
	\leq 
	\sumtt w_t c_t \cdot u^{y_t}_{2\gamma} 
	+ b \sqrt{2T\log\frac{2}{\delta}} + R_\delta (T).
\end{align*}
Then by the definition of $\costsOriginal$ in (\ref{eq:costOriginal}), we can compute
\begin{equation*}
    c_t \cdot u^y_\gamma = \frac{1-\gamma}{k}.
\end{equation*}
Therefore, using the assumption $w_t \geq m$ for all $t$, we can bound
\begin{equation*}
    \sumtt w_t c_t \cdot (u^{y_t}_{\gamma} - u^{y_t}_{2\gamma}) 
    = \frac{\gamma}{k}\sumtt w_t
    \geq \frac{\gamma}{k} mT.
\end{equation*}
Then by taking 
\begin{equation*}
    S=\sup_T -\frac{\gamma}{k}mT + b\sqrt{2T\log\frac{1}{\delta}} + R_\delta(T),
\end{equation*}
we prove that with probability $1-2\delta$,
\begin{align*}
	\sumtt w_t \chat_{t,\yhat_t} 
	\leq 
	\sumtt w_t c_t \cdot u^{y_t}_{\gamma} 
	+ S,
\end{align*}
which shows the learner and the adversary satisfy BanditWLC($\gamma, 2\delta, S$). 
\end{proof}

\subsection{Proof of Theorem \ref{thm:BMBM}}
\label{banditboost_proof}
Note that the cost vectors defined in (\ref{eq:costBMBM}) does not put zero cost on the correct label. In order to apply the BanditWLC, we transform the cost vector. Since the zero-one loss vector has the minimal loss on the true label, we can inductively check that $\argmin_l c^i_{t,l} = y_t$. Then we define $d^i_t \in \reals^k$ as below:
\begin{equation*}
    d^i_{t, l} = c^i_{t, l} - c^i_{t, y_t}.
\end{equation*}
The minimal entry of $d^i_t$ is zero. Let $w^i_t = ||d^i_t||_1$, which plays a similar role of the sample weight in that $\frac{d^i_t}{w^i_t} \in \costsOriginal$. We also define $w^{i*} = \sup_t w^i_t$. 

We bound the cumulative potential functions by following the modified proof of Theorem 2 from \cite{jmulti}.

\begin{lemma}
With probability $1-N\delta$, we have 
\begin{equation*}
    \sumtt \phi^{y_t}_0(s^N_t)
    \leq
    \phi^1_N (\zerov) \cdot T + S \sum_{i=1}^N w^{i*}.
\end{equation*}
\end{lemma}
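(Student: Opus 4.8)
The plan is to telescope the potential function across the $N$ weak learners and invoke the BanditWLC once per learner, mirroring the boost-by-majority argument of \cite{jmulti}. First I would record the two identities that make the potential recursion line up with the cost vectors of BanditBBM. Because $\alpha^i_t = 1$, we have $s^i_t = s^{i-1}_t + \stdv_{h^i_t}$ and $s^0_t = \zerov$. The cost definition in (\ref{eq:costBMBM}) then gives $c^i_{t,h^i_t} = \phi^{y_t}_{N-i}(s^i_t)$, while the potential recursion gives $c^i_t \cdot u^{y_t}_\gamma = \E_{l \sim u^{y_t}_\gamma}\phi^{y_t}_{N-i}(s^{i-1}_t + \stdv_l) = \phi^{y_t}_{N-i+1}(s^{i-1}_t)$. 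Subtracting, the per-learner potential increment is exactly
\[
\phi^{y_t}_{N-i}(s^i_t) - \phi^{y_t}_{N-i+1}(s^{i-1}_t) = c^i_{t,h^i_t} - c^i_t \cdot u^{y_t}_\gamma,
\]
and summing over $i = 1, \dots, N$ telescopes, since the subtracted term at index $i$ coincides with the added term at index $i-1$, collapsing to $\phi^{y_t}_0(s^N_t) - \phi^{y_t}_N(\zerov)$.

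Next I would convert the increment into the zero-on-truth costs $d^i_t$ so that BanditWLC applies. Because $u^{y_t}_\gamma$ is a distribution, subtracting the common constant $c^i_{t,y_t}$ from every coordinate cancels in both $c^i_{t,h^i_t}$ and $c^i_t \cdot u^{y_t}_\gamma$, so the increment equals $d^i_{t,h^i_t} - d^i_t \cdot u^{y_t}_\gamma$. Using $\argmin_l c^i_{t,l} = y_t$ we have $d^i_t/w^i_t \in \costsOriginal$, and I would feed learner $i$ the centered, normalized random costs with sample weight $\bar w^i_t = w^i_t/w^{i*} \in [0,1]$, whose centered estimate is an unbiased, bounded estimate of $d^i_t/w^i_t$. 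Applying BanditWLC($\gamma,\delta,S$), with probability $1-\delta$,
\[
\sumtt \bigl(d^i_{t,h^i_t} - d^i_t \cdot u^{y_t}_\gamma\bigr) \leq S\, w^{i*}.
\]

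Finally I would combine over learners and times: summing the telescoped identity over $t$ and the BanditWLC bound over $i$, with a union bound across the $N$ applications (probability $1-N\delta$), yields $\sumtt \phi^{y_t}_0(s^N_t) \leq \sumtt \phi^{y_t}_N(\zerov) + S\sum_{i=1}^N w^{i*}$, and the symmetry $\phi^{y_t}_N(\zerov) = \phi^1_N(\zerov)$ turns the first sum into $\phi^1_N(\zerov)\cdot T$. The hard part will be the bridge in the middle step, namely justifying that BanditWLC may be invoked on the transformed sequence: one must check that centering and positive-scaling the random estimate $\chat^i_t$ keeps it an unbiased, $\frac{k}{\rho}$-bounded estimate of $d^i_t/w^i_t \in \costsOriginal$ (this $b$ is what feeds $S$) and leaves the weak learner's predictions $h^i_t$ unchanged, as anticipated by the additive-shift remark following (\ref{eq:costOriginal}), and that normalizing by $w^{i*}$ is exactly what produces the factor $w^{i*}$ in the bound. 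The symmetry claim is comparatively minor: $\phi^y_N(\zerov)$ is the probability that the tie-broken $\argmax$ of $N$ i.i.d.\ draws from $u^y_\gamma$ differs from $y$, which is invariant under relabeling the coordinates once the tie-breaking convention is fixed consistently.
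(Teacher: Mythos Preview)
Your proposal is correct and follows essentially the same route as the paper. You derive the telescoping identity $\phi^{y_t}_{N-i}(s^i_t)-\phi^{y_t}_{N-i+1}(s^{i-1}_t)=d^i_{t,h^i_t}-d^i_t\cdot u^{y_t}_\gamma$ directly from the potential recursion, whereas the paper quotes the equivalent identity from \cite{jmulti} (with $d^i_t\cdot u^{y_t}_\gamma$ written out as $\frac{1-\gamma}{k}w^i_t$); both then apply BanditWLC with weights $w^i_t/w^{i*}$ and normalized costs $d^i_t/w^i_t$, sum over $i$ with a union bound, and finish via the symmetry $\phi^{y}_N(\zerov)=\phi^1_N(\zerov)$.
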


\begin{proof} In the proof of Theorem 2 by \cite{jmulti}, the authors write
\begin{align*}
\sumtt &\phi^{y_t}_{N-i+1}(s^{i-1}_t) \\
	&= \frac{1-\gamma}{k}\sumtt w^i_t
		- \sumtt d^i_{t, h^i_t} + \sumtt \phi^{y_t}_{N-i}(s^i_t).
\end{align*}
Using the fact that our weak learners satisfy BanditWLC($\gamma, \delta, S$), we have with probability $1-\delta$
\begin{equation*}
    \frac{1}{w^{i*}}\sumtt d^i_{t, h^i_t}
    \leq 
    \frac{1-\gamma}{k w^{i*}} \sumtt w^i_t + S,
\end{equation*}
from which we deduce
\begin{align*}
    \sumtt &\phi^{y_t}_{N-i+1}(s^{i-1}_t) + w^{i*}S
    \geq 
    \sumtt \phi^{y_t}_{N-i}(s^i_t).
\end{align*}
Summing this over $i$ and using the union bound, we have with probability $1-N\delta$,
\begin{equation*}
    \sumtt \phi^{y_t}_0(s^N_t)
    \leq
    \sumtt \phi^{y_t}_N (\zerov) + S \sum_{i=1}^N w^{i*}.
\end{equation*}
By symmetry, we can check $\phi^{l}_N (\zerov) = \phi^{1}_N (\zerov)$ for any label $l \in [k]$, which completes the proof. 
\end{proof}
We now prove Theorem \ref{thm:BMBM}.
\begin{proof}
Since $\yhat_t = \argmax_l s^N_{t, l}$, we obtain 
\begin{equation*}
    \phi^{y_t}_0(s^N_t) 
    = 
    \Ind(\yhat_t \neq y_t).
\end{equation*}
Furthermore, \cite{jmulti} bound the terms that appear in the previous lemma:
\begin{gather*}
    \phi^{l}_N (\zerov) 
    \leq (k-1) e^{-\frac{\gamma^2 N}{2}} \\
    \sum_{i=1}^N w^{i*} 
    = O(k^{5/2}\sqrt N).
\end{gather*}
Combining these, we get with probability $1-N\delta$, 
\begin{align*}
    \sumtt \Ind(\yhat_t \neq y_t)
    &\leq
    (k-1) e^{-\frac{\gamma^2 N}{2}} T + O(k^{5/2}\sqrt NS)\\
    &\leq (k-1) e^{-\frac{\gamma^2 N}{2}} T + \tilde{O}(\frac{k^{7/2}\sqrt{N}}{\rho}),
\end{align*}
where the last inequality holds by (\ref{eq:SBound}).

To bound the booster's loss $\Ind(\ytilde_t \neq y_t)$, observe
\begin{equation*}
    \E_{\ytilde_t} \Ind(\ytilde_t \neq y_t) \leq \Ind(\yhat_t \neq y_t) + \rho.
\end{equation*}
Using the concentration inequality, we have with probability $1-(N+1)\delta$, 
\begin{align*}
    &\sumtt \Ind(\ytilde_t \neq y_t)\\
    &\leq
    (k-1) e^{-\frac{\gamma^2 N}{2}} T + \tilde{O}(\frac{k^{7/2}\sqrt{N}}{\rho})  + \rho T + \sqrt{T \log \frac{1}{\delta}}\\
    &\leq
    (k-1) e^{-\frac{\gamma^2 N}{2}} T +2\rho T + \tilde{O}(\frac{k^{7/2}\sqrt{N}}{\rho}),
\end{align*}
where we use the relation $\rho T + \frac{\log \frac{1}{\delta}}{\rho} \geq 2\sqrt {T\log \frac{1}{\delta}}$ to absorb the term $\sqrt{T \log \frac{1}{\delta}}$. This proves the main theorem. 
\end{proof}

\subsection{Proof of Theorem \ref{thm:OLBM}}
\label{adaptive_banditboost_proof}
We first recall a lemma from \cite{jmulti} to aid the proof.
\begin{lemma}[\cite{jmulti}, Lemma 11]
\label{opt_lemma}
Suppose $A, B \geq 0$, $B-A = \gamma \in [-1, 1]$, and $A+B \leq 1$. Then we have
\begin{gather*}
\min_{\alpha \in [-2,2]} A(e^{\alpha}-1) + B(e^{-\alpha}-1) \leq
	-\frac{\gamma^2}{2}.
\end{gather*}
\end{lemma}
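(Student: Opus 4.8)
The plan is to treat this as a one-dimensional convex optimization problem and locate its minimizer explicitly. Write $f(\alpha) = A(e^{\alpha}-1) + B(e^{-\alpha}-1)$. Since $f''(\alpha) = Ae^{\alpha} + Be^{-\alpha} \ge 0$, the function $f$ is convex, and (unless $A=B=0$, in which case $\gamma=0$ and the claim is trivial) strictly convex with a unique unconstrained minimizer. Setting $f'(\alpha) = Ae^{\alpha} - Be^{-\alpha} = 0$ gives $\alpha^\star = \tfrac12\log(B/A)$, at which $Ae^{\alpha^\star} = Be^{-\alpha^\star} = \sqrt{AB}$, so the unconstrained minimum value is $f(\alpha^\star) = 2\sqrt{AB} - (A+B) = -(\sqrt{B}-\sqrt{A})^2$.

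The core of the argument is the algebraic inequality $(\sqrt{B}-\sqrt{A})^2 \ge \gamma^2/2$. Writing $s = A+B$, I would use $2\sqrt{AB} = \sqrt{s^2 - \gamma^2}$ (from $(A+B)^2 - (B-A)^2 = 4AB$) to get $(\sqrt{B}-\sqrt{A})^2 = s - \sqrt{s^2-\gamma^2} = \gamma^2/(s + \sqrt{s^2 - \gamma^2})$ after rationalizing. Since $\sqrt{s^2-\gamma^2}\le s \le 1$, the denominator is at most $2s \le 2$, which yields $(\sqrt{B}-\sqrt{A})^2 \ge \gamma^2/2$ and hence $f(\alpha^\star) \le -\gamma^2/2$. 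This is exactly where the hypothesis $A+B\le 1$ is consumed.

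It remains to account for the constraint $\alpha \in [-2,2]$, since $f(\alpha^\star)$ is only the minimum over all of $\reals$. By the symmetry $f(-\alpha; A, B) = f(\alpha; B, A)$ and the invariance of the bound under $\gamma \mapsto -\gamma$, I would assume without loss of generality $B \ge A$, so $\alpha^\star \ge 0$. If $\alpha^\star \le 2$ then the interior minimizer is feasible and $\min_{\alpha\in[-2,2]} f(\alpha) = f(\alpha^\star) \le -\gamma^2/2$, finishing the proof. If instead $\alpha^\star > 2$, convexity makes $f$ decreasing on $[-2,2]$, so the constrained minimum is attained at $\alpha = 2$; there I would verify directly that $f(2) = A(e^2-1) - B(1-e^{-2}) \le -\gamma^2/2$. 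Reparametrizing by $s,\gamma$, this reduces to $\tfrac{s}{2}(e^2+e^{-2}-2) - \tfrac{\gamma}{2}(e^2-e^{-2}) + \tfrac{\gamma^2}{2}\le 0$; in this regime the condition $\alpha^\star>2$ forces $\gamma$ close to $s$ (precisely $\gamma \ge \tfrac{e^4-1}{e^4+1}\,s$), and the quadratic expression above is negative there for all $0\le s\le 1$.

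The main obstacle is this last boundary case: the clean computation $f(\alpha^\star) = -(\sqrt{B}-\sqrt{A})^2$ bounds only the \emph{unconstrained} minimum, and clipping to $[-2,2]$ raises the value, so one cannot simply invoke $f(\alpha^\star)\le-\gamma^2/2$ once $\alpha^\star$ leaves the interval. Fortunately the boundary value $f(2)$ obeys an even stronger bound, but confirming this requires the explicit (if routine) estimate above rather than a soft convexity argument.
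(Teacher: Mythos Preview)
The paper does not prove this lemma; it merely quotes it from \cite{jmulti} as a black box, so there is no in-paper argument to compare against.

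Your approach is correct. The interior case $|\alpha^\star|\le 2$ is fully rigorous: the identity $f(\alpha^\star)=-(\sqrt B-\sqrt A)^2$ together with the rationalization $(\sqrt B-\sqrt A)^2=\gamma^2\big/\bigl(s+\sqrt{s^2-\gamma^2}\bigr)\ge\gamma^2/2$ (using $s=A+B\le 1$) is exactly right and consumes the hypothesis $A+B\le 1$ sharply.

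The only soft spot is the boundary case $\alpha^\star>2$, where you assert without verification that the quadratic
\[
g(\gamma)=\tfrac{s}{2}(e^2+e^{-2}-2)-\tfrac{\gamma}{2}(e^2-e^{-2})+\tfrac{\gamma^2}{2}
\]
is nonpositive on the regime $\gamma\in\bigl[\tfrac{e^4-1}{e^4+1}\,s,\;s\bigr]$. The cleanest way to close this is to note that $g$ is convex in $\gamma$ and check only the two endpoints: at $\gamma=\tfrac{e^4-1}{e^4+1}\,s$ one has $\alpha^\star=2$ exactly, so $f(2)=f(\alpha^\star)\le-\gamma^2/2$ by the interior analysis, giving $g\le0$ there; at $\gamma=s$ (i.e.\ $A=0$) one computes $g(s)=s(e^{-2}-1)+\tfrac{s^2}{2}$, which is nonpositive since $s\le1<2(1-e^{-2})$. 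This fills the gap in two lines and avoids the unspecified ``routine estimate'' you allude to.
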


Now we proceed with a bound of the zero-one loss of AdaBandit. The main structure of the proof results from the mistake bound of Adaboost.OLM by \cite{jmulti}.

\begin{proof}
We let $M_i$ denote the number of mistakes made by expert $i$: $M_i = \sumtt \Ind(\yhat^i_t \neq y_t )$. We also let $M_0 = T$ for convenience. As the booster uses the estimate $\zeroOneHat_t$ to run the Hedge algorithm, we define $\hat{M}_i = \sumtt \zeroOneHat_{t, \yhat^i_t}$ so that $\E_{\ytilde_1, \cdots, \ytilde_T} \hat{M}_i = M_i$. If we write $i^* = \argmin_i M_i$, then by the Azuma-Hoeffding inequality and the fact that $\zeroOneHat_t$ is $\frac{k}{\rho}$-bounded, we have with probability $1-\delta$,
\begin{equation*}
    \min_i \hat{M}_i \leq \hat{M}_{i^*} \leq \min_i M_i + \tilde{O}(\frac{k}{\rho}\sqrt{T}),
\end{equation*}
where $\tilde{O}$ suppresses dependence on 
$\log \frac{1}{\delta}$.

Then a standard analysis of the Hedge algorithm (see Corollary 2.3 by \cite{cesa2006prediction}) and the Azuma-Hoeffding inequality provide that with probability $1-3\delta$,
\begin{align}
\begin{split}
\label{hedge_bound}
    \sumtt \Ind(\yhat_t \neq y_t) 
    &\leq \sumtt \zeroOneHat_{t, \yhat_t} + \tilde{O}(\frac{k}{\rho}\sqrt{T})\\
    &\leq 2 \min_i \hat{M}_i + 2 \log N + \tilde{O}(\frac{k}{\rho}\sqrt{T})\\
    &\leq 2 \min_{i} M_i + 2\log N + \tilde{O}(\frac{k}{\rho}\sqrt{T}).
\end{split}
\end{align}
Now define $w^i = -\sumtt c^i_{t, y_t}$. If the expert $i-1$ makes a mistake at round $t$, there is $l \neq y_t$ such that $s^{i-1}_{t, y_t} \leq s^{i-1}_{t, l}$. According to (\ref{eq:adaCost}), this implies that $-c^i_{t, y_t} \geq \frac{1}{2}$. From this, we can deduce that 
\begin{equation}
    \label{eq:OLBMproof1}
    w^i \geq \frac{M_{i-1}}{2}.
\end{equation}
By our convention $M_0 = T$, the above inequality still holds for $i=1$. 

Next we define the difference in the cumulative logistic loss between two consecutive experts as
\begin{align*}
    \Delta_i 
    &= \sumtt\logLoss_{y_t}(s^{i}_t)
	- \logLoss_{y_t}(s^{i-1}_t)\\
    &= \sumtt\logLoss_{y_t}(s^{i-1}_t + \alpha^i_t e_{h^i_t})
	- \logLoss_{y_t}(s^{i-1}_t).
\end{align*}
From (\ref{eq:convexRegret}), we have with probability $1-\delta$,
\begin{align}
\begin{split}
\label{ogd_bound}
\Delta_i \leq \min_{\alpha \in [-2, 2]}
	\sumtt[\logLoss_{y_t}(s^{i-1}_t + \alpha e_{h^i_t})
	&- \logLoss_{y_t}(s^{i-1}_t)]\\
	&+ \tilde{O}(\frac{k^2}{\rho}\sqrt{T}).
\end{split}
\end{align}
Let us record an inequality:
\begin{align*}
    \log(1 + e^{s+\alpha}) - \log (1 + e^s) 
    &= \log(1 + \frac{e^\alpha - 1}{1 + e^{-s}}) \\
    &\leq \frac{e^\alpha -1}{1 + e^{-s}}.
\end{align*}
Using this, we can write 
\begin{align*}
    \logLoss_{y_t}(s^{i-1}_t + \alpha e_{h^i_t})
	- \logLoss_{y_t}(s^{i-1}_t)~~~~~~~~~~~~~~~~~~& \\
	\leq
	\begin{cases}
	    c^i_{t, h^i_t} (e^\alpha - 1) &\text{ if } h^i_t \neq y_t \\
	    c^i_{t, h^i_t} (-e^{-\alpha} + 1) &\text{ if } h^i_t = y_t
	\end{cases}.
\end{align*}
Summing this over $t$, we get 
\begin{align*}
    \sumtt\logLoss_{y_t}(s^{i-1}_t + \alpha e_{h^i_t})
	&- \logLoss_{y_t}(s^{i-1}_t)\\
	&\leq w^i(A(e^\alpha - 1) + B(e^{-\alpha}-1)),
\end{align*}
where 
\begin{align*}
    A=\sum_{t:h^i_t \neq y_t} c^i_{t, h^i_t}/w^i,~
    B=-\sum_{t:h^i_t = y_t} c^i_{t, h^i_t}/w^i.
\end{align*}
By (\ref{eq:adaCost}), $A$ and $B$ are non-negative and $B-A = \gamma_i \in [-1, 1]$, which is the empirical edge of $WL^i$. Then Lemma \ref{opt_lemma} implies 
\begin{align*}
    \min_{\alpha \in [-2, 2]}\sumtt\logLoss_{y_t}(s^{i-1}_t + \alpha e_{h^i_t})
	&- \logLoss_{y_t}(s^{i-1}_t)
	\leq -\frac{\gamma_i^2}{2}w^i.
\end{align*}
Combining this result with (\ref{eq:OLBMproof1}) and (\ref{ogd_bound}), we have with probability $1-\delta$, 
\begin{equation*}
    \Delta_i \leq -\frac{\gamma_i^2}{4}M_{i-1} + \tilde{O}(\frac{k^2}{\rho}\sqrt T).
\end{equation*}
Summing this over $i$ and using the union bound, we have with probability $1-N\delta$, 
\begin{align*}
    \sumtt \logLoss_{y_t}(s^N_t) 
    &- \logLoss_{y_t}(\zerov) \\
    &\leq -\frac{\min_i M_i}{4} \sum_{i=1}^N \gamma_i^2 + \tilde{O}(\frac{k^2N}{\rho}\sqrt T).
\end{align*}
Since $\logLoss_{y_t}(\zerov) = (k-1)\log 2$ and $\logLoss_{y_t}(s^N_t)\geq 0$, we have with probability $1-N\delta$, 
\begin{align*}
    \min_i M_i \leq 
    \frac{4(k-1) \log2}{\sum_{i=1}^N \gamma_i^2}T + \tilde{O}(\frac{k^2N}{\rho\sum_{i=1}^N \gamma_i^2}\sqrt T).
\end{align*}
Using this to (\ref{hedge_bound}), we get with probability $1-(N+3)\delta$, 
\begin{gather*}
    \sumtt \Ind(\yhat_t \neq y_t) 
    \leq
	\frac{8(k-1) \log2}{\sum_{i=1}^N \gamma_i^2}T
	+ \tilde{O}(\frac{k^2N}{\rho\sum_{i=1}^N \gamma_i^2}\sqrt T).
\end{gather*}
Then by the same argument as in Appendix \ref{banditboost_proof}, we get with probability $1-(N+4)\delta$,
\begin{align*}
    \sumtt& \Ind(\ytilde_t \neq y_t) \\
    &\leq
	\frac{8(k-1) \log2}{\sum_{i=1}^N \gamma_i^2}T
	+ 2\rho T + \tilde{O}(\frac{k^2N}{\rho\sum_{i=1}^N \gamma_i^2}\sqrt T)\\
	&\leq
	\frac{8k}{\sum_{i=1}^N \gamma_i^2}T
	+ 2\rho T + \tilde{O}(\frac{k^3N^2}{\rho^2\sum_{i=1}^N \gamma_i^2}),
\end{align*}
where the last inequality comes from the arithmetic mean and geometric mean relation: 
\begin{equation*}
    2ck^2N\sqrt T \leq kT + c^2 k^3 N^2.
\end{equation*}
\end{proof}

\section{DETAILED DESCRIPTIONS OF EXPERIMENTS}
\label{experiments_appendix}

\begin{table*}[t]
\caption{Bandit and Full Information Total Accuracy} \label{warmup_table}
\begin{center}
\begin{tabular}{lccccccc}
\toprule
Data  &$k$ &StreamCnt &OptBandit &AdaBandit &BinBandit &OptFull &AdaFull \\
\midrule
Balance         &3  &6250       &0.83   &0.91   &0.73   &0.71   &0.85\\
Car             &4  &10368      &0.82   &0.93   &0.78   &0.80   &0.93\\
Nursery         &4  &51840      &0.89   &0.92   &0.89   &0.91   &0.95\\
Movement        &5  &165631     &0.89   &0.95   &0.84   &0.87   &0.95\\
Mice            &8  &8640       &0.53   &0.71   &0.62   &0.65   &0.84\\
Isolet          &26 &116955     &0.32   &0.51   &0.52   &0.74   &0.78\\
\bottomrule
\end{tabular}
\end{center}
\end{table*}

We discuss the experimental results more in detail.

\subsection{Data Set Details}
\label{dataset_info}

We modified the data sets identically as in \cite{jmulti}. In particular, we replaced missing data values in Mice data with $0$ and removed user information from Movement, leaving only sensor data in the latter. A single data point with missing values was also removed from Movement. Lastly, for Isolet the original $617$ covariates were projected onto the top $50$ principal components of the data set, retaining 80\% of the variance. Table \ref{table:dataSummary} summarizes the data information after preprocessing.

\begin{table}[b]
\caption{Data Set Summary} 
\begin{center}
\begin{tabular}{lccc}
\toprule
Data    &Size   &Dimension   &$k$\\
\midrule
Balance     &625    &4  &3 \\
Car         &1728   &82 &4 \\
Nursery     &12960  &4  &8 \\
Mice        &1080   &82 &8 \\
Isolet      &7797   &50 &26 \\
Movement    &165631 &12 &4 \\
\bottomrule
\end{tabular}
\end{center}
\label{table:dataSummary}
\end{table}

\subsection{Parameter Tuning}
\label{rho_experiments}
Our boosting algorithms have a few parameters: the number of weak learners $N$, the edge $\gamma$ for BanditBBM, and the exploration rate $\rho$. As the goal of the experiment is to compare the bandit algorithms with their full information counterparts, we did not optimize the parameters too hard. We optimized $N$ up to multiples of 5 and fix $\gamma=0.1$ for all data sets. 

The only parameter we tried to fit is exploration rate. To obtain a reasonable $\rho$, we ran a grid search keeping all other parameters stable and observing accuracy on a random stream from each data set. The chosen $\rho$ values reflect choices that made all the bandit algorithms perform well on each set. Table \ref{rho_table} shows the chosen $\rho$ from each of these grid searches. How many data points were streamed to obtain the final accuracy is shown in column Count.

\begin{table}[b]
\caption{Parameters for Bandit Algorithms} \label{rho_table}
\begin{center}
\begin{tabular}{lccccc}
\toprule
Data    &Count   &$\rho$   &$N_\text{Opt}$  &$N_\text{Ada}$ &$\gamma$ \\
\midrule
Balance     &2500   &0.001      &20     &15     &0.1\\
Car         &6912   &0.001      &15     &15     &0.1\\
Nursery     &12960   &0.001     &10     &5      &0.1\\
Movement    &165631  &0.001     &10     &20     &0.1\\
Mice        &4320   &0.1        &10     &20     &0.1\\
Isolet      &38985   &0.1       &10     &20     &0.1\\
\bottomrule
\end{tabular}
\end{center}
\end{table}

\subsection{Updating Weak Learners}
We used the VFDT algorithm designed by \cite{trees}. The learner takes a label and an importance weight to be updated. When a cost vector $\chat^i_t \in \reals^k$ is passed to the learner, we used 
\begin{equation*}
    l = \argmin_j \chat^i_{t, j} \text{ and } w = \sum_{j=1}^k (\chat^i_{t, j} - \chat^i_{t, l})
\end{equation*}
as the label and the importance weight, respectively. If there were multiple minima in $\chat^i_t$, we chose one of them randomly in a mistake round and selected the true label $y_t$ in a correct round if it minimized the cost vector.

One weakness of VFDT is that its performance is very sensitive to the range of importance weights. To address this, we added clipping in our implementation to prevent cost vectors from having excessively large entries. We introduced a magic number $100$ and clipped the entry whenever it went outside the range $[-100, 100]$. Clipping was especially helpful in stabilizing results for data sets with large $k$. Additionally, we scaled the importance weights for Isolet, as large $k$ tends to create larger weights.

\begin{figure}[b!]
\centering
\includegraphics[width=0.9\columnwidth]{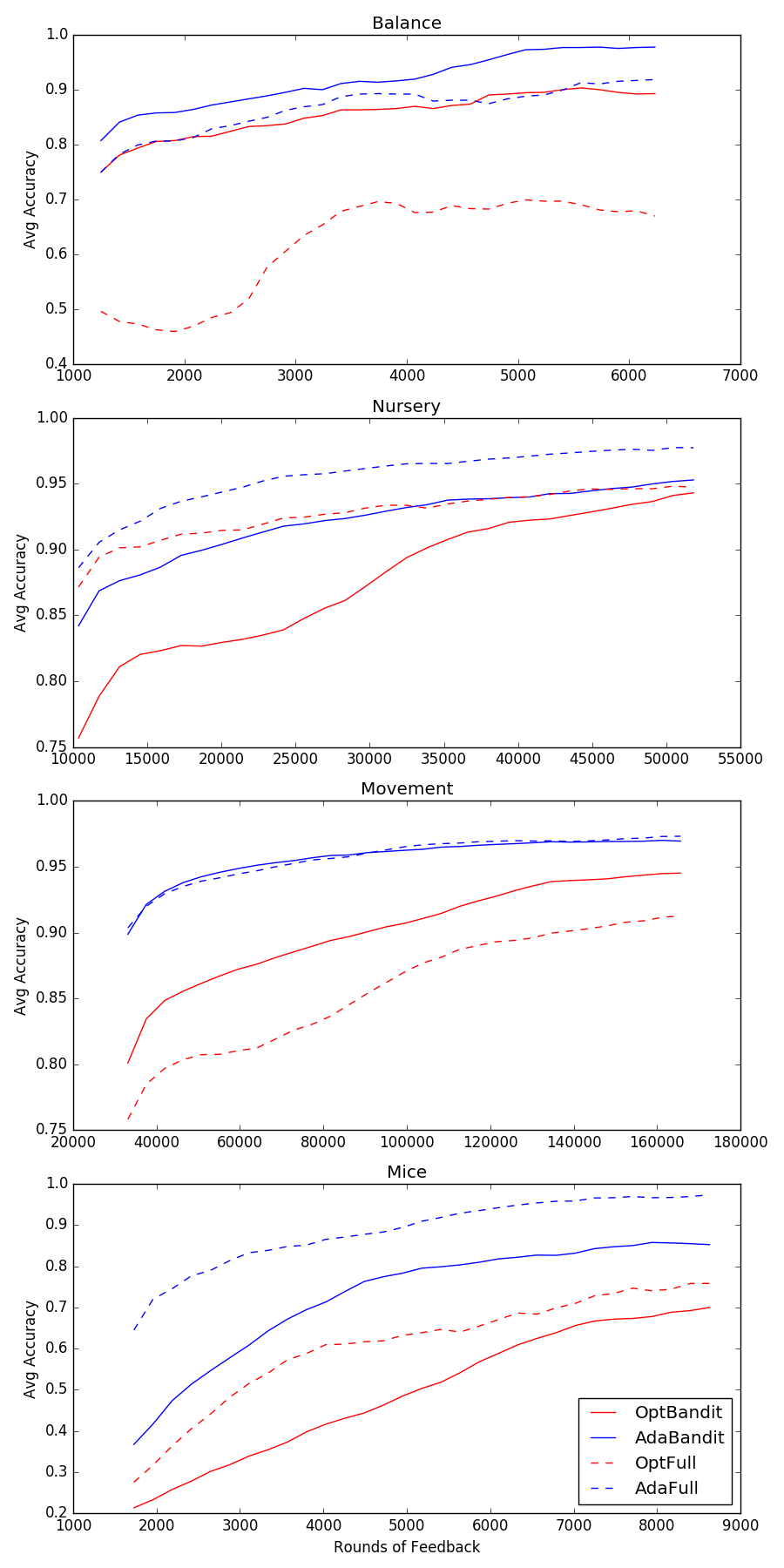}
\caption{Learning Curves on Balance (Top), Nursery, Movement, and Mice (Bottom); Best Viewed in Color}
\label{fig2}
\end{figure}

\subsection{Total Accuracy}
\label{warmup}

Table \ref{warmup_table} shows the average accuracy of all five algorithms on each of the data sets, in contrast to the asymptotic performance in Table \ref{asymptotic_table}. The effect of increased $k$ on the excess loss is noticeable, showing that the bandit algorithms learn less quickly. For data sets with smaller $k$, the total loss is a fairly large percentage of the asymptotic loss, indicating the the algorithms stay at their asymptotic accuracy for a larger fraction of rounds. For Mice and Isolet data, however, the total accuracy is significantly lower than the asympototic loss, indicating a much more linear improvement that is amortized over the whole data stream. Indeed, this corroborates Figure \ref{fig1}, where accuracy improvement of the bandit algorithms slows and tends towards a straight line for large $k$.

\subsection{Learning Curves}
Figure \ref{fig2} exhibits the learning curves of our bandit algorithms and the full information ones. Similar to the analysis in the main paper, the bandit algorithms learn slower than the full information algorithms in general, and the trend becomes obvious when $k$ gets larger. However, the bandit algorithms, especially AdaBandit, become competitive in the end and sometimes outperform the full information algorithms. The underperformance of the optimal algorithms is partially because we did not optimize the edge $\gamma$, and this aspect makes adaptive algorithms more suitable in practice. It should be noted that the learning curves do not begin at round $0$ because the window accuracy is not defined for a number of rounds less than 20\% of the total rounds to be given.

\end{appendices}

\end{document}